\def\eqref#1{equation~\ref{#1}}
\def\Eqref#1{Equation~\ref{#1}}
\def\1{\bm{1}}
\DeclareMathAlphabet{\mathsfit}{\encodingdefault}{\sfdefault}{m}{sl}
\SetMathAlphabet{\mathsfit}{bold}{\encodingdefault}{\sfdefault}{bx}{n}
\newcommand{\R}{\mathbb{R}}
\useunder{\uline}{\ul}{}
\newcommand{\Z}{\mathbb{Z}}
\newcommand{\W}{\textbf{W}}
\newcommand{\x}{\textbf{x}}
\newcommand{\gt}[1]{\mathscr{#1}}
\newcommand\abs[1]{\left|#1\right|}
\newcommand\dotprod[1]{\left\langle#1\right\rangle}
\newtheorem{theorem}{Theorem}
\newcommand{\method}{\text{AIRe}}
\title{Adaptive Training of INRs via \\ Pruning and Densification}
\author{%
    Diana Aldana$^{1}$\quad \textbf{João Paulo Lima}$^{1,2}$\quad Daniel Csillag$^3$ \quad Daniel Perazzo$^1$\\[0.1cm] \textbf{Haoan Feng}$^{4}$\quad \textbf{Luiz Velho}$^1$\quad \textbf{Tiago Novello}$^1$ \\[0.2cm]
    $^{1}${IMPA} \quad $^{2}${Universidade Federal Rural de Pernambuco}  \quad $^{3}${FGV EMAp} \quad $^{4}${University of Maryland}  %
}
\begin{document}

\maketitle

\begin{abstract}
Encoding input coordinates with sinusoidal functions into multilayer perceptrons (MLPs) has proven effective for implicit neural representations (INRs) of low-dimensional signals, enabling the modeling of high-frequency details.
However, selecting appropriate input frequencies and architectures while managing parameter redundancy remains an open challenge, often addressed through heuristics and heavy hyperparameter optimization schemes. 
In this paper, we introduce \method{} (\textbf{A}daptive \textbf{I}mplicit neural \textbf{Re}presentation), an adaptive training scheme that refines the INR architecture over the course of optimization. Our method uses a neuron pruning mechanism to avoid redundancy and input frequency densification to improve representation capacity, leading to an improved trade-off between network size and reconstruction quality.
For pruning, we first identify less-contributory neurons and apply a targeted weight decay to transfer their information to the remaining neurons, followed by structured pruning.
Next, the densification stage adds input frequencies to spectrum regions where the signal underfits, expanding the representational basis.
Through experiments on images and SDFs, we show that \method{} reduces model size while preserving, or even improving, reconstruction quality.
Code and pretrained models will be released for public use.
\end{abstract}
\vspace{-0.2cm}

\section{Introduction}
\label{sec:intro}
\vspace{-0.1cm}


Implicit neural representations (INRs) have emerged as a powerful framework for modeling low-dimensional signals -- such as images and signed distance functions (SDFs) -- by encoding them directly in the parameters of neural networks~\citep{sitzmann2020implicit, tancik2020fourier, saragadam2023wire, dam2025high}. Instead of storing signals discretely, INRs represent them as continuous functions, mapping input coordinates $\x$ to a network predicting the corresponding signal value. To capture high-frequency content, these networks typically employ two key components: (1) projecting $\x$ into a list of sinusoidals $\sin(\omega\x+\varphi)$, where $\omega$ and $\varphi$ denote the input frequencies and phase shifts, and (2) using periodic activation functions throughout the network layers. This combination enables INRs to represent fine details that standard ReLU-based MLPs struggle to learn due to their spectral bias~\citep{tancik2020fourier,sitzmann2020implicit}.

\begin{figure}[t]
    \centering
    \includegraphics[width=\linewidth]{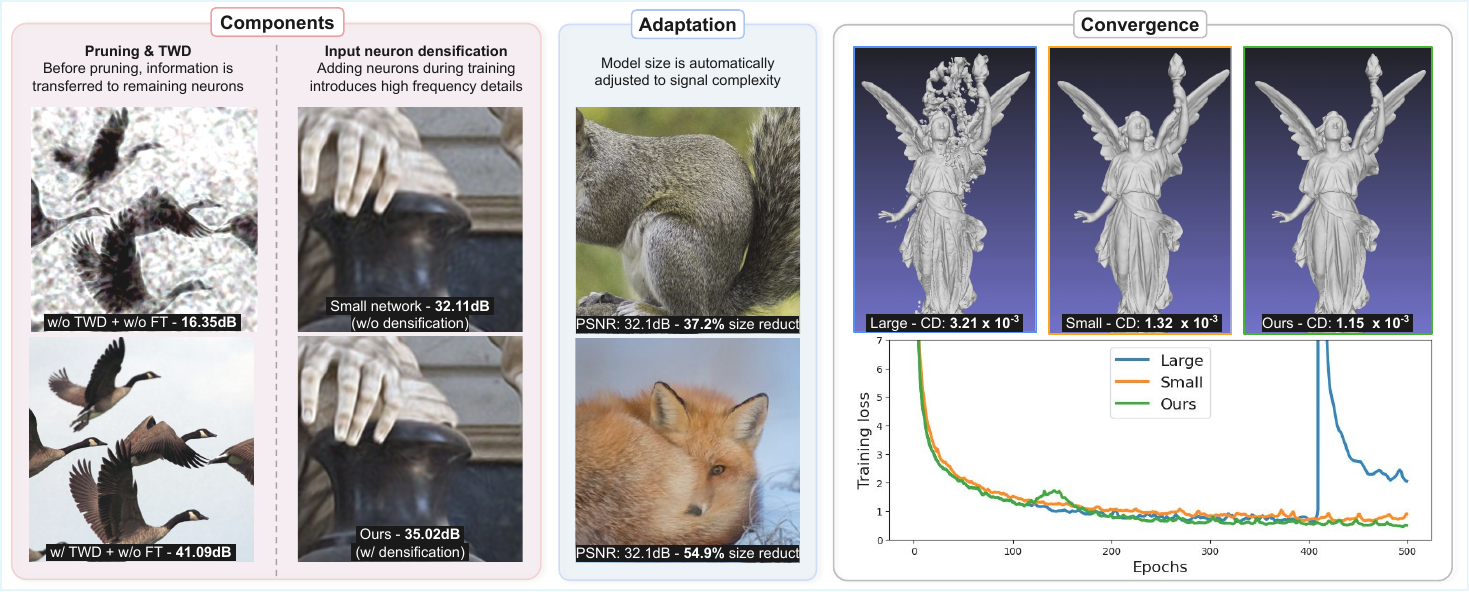}
    \caption{
    We present \method{}, a robust training method that adaptively fits the INR architecture to the target signal
    through two complementary mechanisms: (i) pruning with targeted weight decay (TWD)
    which mitigates parameter redundancy and fine tuning (FT) dependence by transferring information prior to structured neuron removal (see birds), and (ii) input frequency densification, which augments the representation basis, enhancing convergence and details fidelity (see hand). 
    We compare three strategies: (i) an overparameterized SIREN model with standard training (\textit{large network}), (ii) a model adapted with \method{} (\textbf{Ours}), and (iii) a \textit{small network} fitted with standard training. \method{} improves reconstruction accuracy while producing more compact networks (blue box), and enhances training convergence in settings where overparameterization leads to divergence (see statue box). 
    }
    \label{fig:teaser}
    \vspace{-0.2cm}
\end{figure}

Choosing an appropriate network architecture and input frequencies $\omega$ to accurately and compactly fit a target signal is a challenging task. 
Most prior work has addressed this by enhancing the expressiveness of INRs via tailored initialization schemes and specialized activation functions. For example, \citet{zell2022seeing} leveraged an initialization based on Fourier series to control the network's spectrum, enhancing its ability to represent fine-grained details. TUNER~\citep{novello2024taming} provided a theoretical justification for this approach and introduced a training procedure to bandlimit the spectrum dynamically. FINER~\citep{liu2024finer}, on the other hand, employed a modified sine activation combined with bias initialization, allowing the modeling of high-frequency components.
Despite these advances, selecting a compact yet expressive architecture a priori remains difficult: undersized networks tend to underfit the data, while oversized ones often lead to training instabilities and increased susceptibility to overfitting.

To address this challenge, we introduce \textbf{\method} (\textbf{A}daptive \textbf{I}mplicit neural \textbf{Re}presentation), a training framework that progressively adapts a potentially overparametrized INR to the target data through two complementary operations: \emph{pruning} and \emph{densification} of neurons. 
For pruning, we evaluate the contribution of each neuron using a customizable criterion (e.g. weight norms) to identify the most redundant ones. 
To transfer information from these low-contributing neurons to more relevant ones, we propose a novel \textit{targeted weight decay} (TWD) mechanism, which penalizes their weights prior to structured removal.
Once this transfer is induced, the targeted neurons are pruned.
For densification,
we introduce new input
frequencies in underfit regions of the spectrum, expanding the network’s representational capacity when necessary.
By dynamically aligning model complexity with the input data, \method~finds compact INRs that accurately reproduce the target signal.
We showcase some of \method{}'s results in Figure~\ref{fig:teaser},
illustrating strong performance in reconstruction quality, model compactness, and training stability.
\textbf{Our main contributions~are:}
\begin{itemize}
    \item A general framework for the adaptive training of INRs, driven by pruning and densification. The pruning component brings principles from neural network pruning to the INR setting, while also introducing a novel targeted weight decay (TWD) strategy to preserve quality during neuron removal (see Figure~\ref{fig:prun_evol_cols}).
    For densification, we add new input frequencies in underfit spectral regions, enhancing representational capacity (Table~\ref{tab:prune-densify-ablation}).
    Combined, these components enable accurate signal fitting with compact, data-adaptive architectures (Table~\ref{tab:image_sdf_nerf}).

    \item  A theoretical analysis of both pruning and densification mechanisms for INRs.
    In particular, we leverage a harmonic expansion of sinusoidal neural networks (Theorem~\ref{t-expansion}) to derive principled densification schemes, and prove stability of our neural networks under magnitude-based pruning (Theorem~\ref{p-criterion}).
    Together, these promote densification and pruning mechanisms that mitigate divergence during training (cf. Figure~\ref{fig: exp_surfaces_divergence}).

    \item  An empirical evaluation of \method{} across a range of image fitting and 3D shape reconstruction benchmarks. We show that \method{} consistently outperforms both the standard neural network training pipeline (see Table~\ref{tab:image_sdf_nerf}) as well as recent adaptive training methods (Table~\ref{tab:sota}) in terms of the accuracy-efficiency trade-off.
\end{itemize}

\section{Related work}
\label{sec:related_works}

\textbf{INRs} emerged as a modern paradigm for learning low-dimensional signals such as images \citep{chen2021learning, shi2024improved}, image morphing~\citep{schardong2023neural, bizzi2025flowing}, SDFs~\citep{yang2021geometry, novello2022exploring, schirmer2024geometric}, displacement fields~\citep{yifan2021geometry}, surface animation~\citep{mehta2022level, novello2023neural}, and multiresolution signals  \citep{paz2023mr, saragadam2022miner, lindell2022bacon, wu2023neural}.
On the methodological side, several works have explored the representation capacity of INRs \citep{mehta2021modulated, yuce2022structured, saratchandran2024sampling}, as well as the critical role of initialization strategies \citep{novello2022understanding, paz2024implicit, saratchandran2024activation, finn2017model,yeom2024fast}.

\textbf{Neural network pruning} has long been of interest to the machine learning community \citep{lecun1989optimal, hassibi1993optimal, thimm1995evaluating, frankle2018the, hoefler2021sparsity, blalock2020state, menghani2023efficient}. 
Classic approaches have relied on metrics such as weight magnitude, salience, or second-order derivatives, and are often followed by fine-tuning or regularization (e.g., weight decay) to preserve performance~\citep{han2015learning, tessier2022rethinking}.
However, it is known that methods often fail to generalize beyond their original settings \citep{blalock2020state}.
To the best of our knowledge, \citet{zell2022seeing} is the only prior work exploring the pruning (or adaptation) of INRs.
Their method removes input neurons to select an appropriate representational basis, but they did not explore hidden layer pruning.
In contrast, our method adapts the model size to target redundancy in the signal detail content while choosing a fitting input frequency encoding.

Recent work has investigated ways to adapt network architectures during training.
The lottery ticket hypothesis~\citep{frankle2018the} suggests that sparse subnetworks within overparameterized models can perform just as well when trained independently.
Building on this idea, RigL~\citep{evci2020rigging} dynamically adjusts connectivity by pruning and growing connections during training.
While promising, such strategies have not been studied in the context of INRs, where the objectives, data modalities, and inductive biases differ significantly from those in standard classification tasks.
In Table~\ref{tab:sota}, we adapt these methods to the INR setting and compare them with \method{}, showing that our approach achieves superior results.

\section{Adaptive training of INRs}
\label{sec:method}
\vspace{-0.1cm}
\label{s-adaptive-training}


\begin{figure*}[h!]
    \centering
    \includegraphics[width=\textwidth]{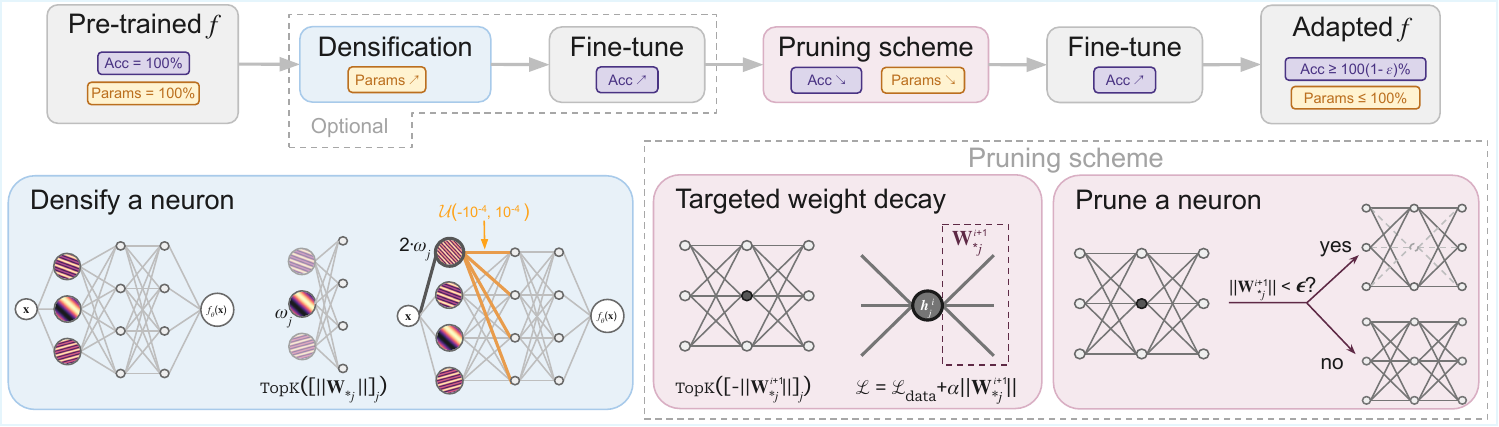}
    \caption{
    We present \method, a training framework that adapts network architecture through two theoretically grounded strategies: \textit{densification} and \textit{pruning}. For signals with rich frequency content, densification selects the most relevant input frequencies $\omega_j$ and expands the spectrum by augmenting $\omega$ with $2 \cdot \omega_j$.
    To reduce network size, pruning identifies candidate neurons via magnitude criterion, transfers information during training with a novel targeted weight decay (TWD) regularization, and removes neurons whose norm falls below a threshold $\epsilon$. The function $\text{TopK}(v)$ selects the $K$ largest entries of $v$.
    }
    \label{fig: overview}
\end{figure*}


Our goal is to develop a training framework that dynamically adapts a sinusoidal INR architecture to the given data samples $\{\x_j, \gt{f}_j\}$ from a low-dimensional signal $\gt{f}$.
Specifically, we want to adjust the size of a
sinusoidal MLP
of depth $d \in \mathbb{N}$ defined as $f(\mathbf{x}) = \mathbf{L} \circ \mathbf{S}^d \circ \cdots \circ \mathbf{S}^0(\mathbf{x})$, a composition of $d$ sinusoidal layers $\mathbf{S}^i(\mathbf{x}) = \sin(\mathbf{W}^i \mathbf{x} + \mathbf{b}^i)$ parameterized by a weight matrix $\mathbf{W}^i \in \mathbb{R}^{n_{i+1} \times n_i}$ and a bias vector $\mathbf{b}^i \in \mathbb{R}^{n_{i+1}}$, followed by an affine layer $\mathbf{L}$.
Observe that the first layer $\mathbf{S}^0$ maps the input coordinates $\mathbf{x}$ into a harmonic embedding of the form $\sin(\omega \mathbf{x} + \varphi)$, where we denote $\omega := \mathbf{W}^0$ as the matrix of \emph{input frequencies} and $\varphi := \mathbf{b}^0$ as the vector of \emph{phase shifts}. 

Although the choice of $\{n_i\}_i$ is critical for determining network capacity, it is typically based on empirical heuristics.
Moreover, a model with poorly initialized input frequencies $\omega$ may fail to capture the full spectrum of the signal, leading to unsatisfactory reconstruction.
To address these problems, we adapt a model architecture by adding and removing neurons. More precisely, we define the $ij$-\textit{neuron} $h_j^i(\textbf{x})$ of $f$ as the $j$-th coordinate of the output of the $i$-th layer, that is, 
\begin{align}\label{eq: neuron}
h_j^{i+1}(\textbf{x}) = \sin\bigl(\W_{j*}^{i+1} \sin(\textbf{y}^{i}) + b_j^{i+1}\bigr),    
\end{align}
where $\textbf{y}^{i}$ denotes the linear transformation of the $i$th layer prior to activation. 
Then, we densify the input layer by appending new neurons to $\textbf{h}^0(\textbf{x})$, introducing novel frequencies to expand the spectral coverage.
Finally, we employ a magnitude-based neuron pruning scheme to account for potential redundancy in parameters.
Figure~\ref{fig: overview} provides an overview of \method.


\subsection{Densification}

Sinusoidal INRs employ an encoding layer to mitigate spectral bias and enhance the representation of high-frequency signals.
However, they are heavily dependent on their initialization, which may lead to noisy reconstructions or slower training.
Here, we propose a principled input neuron densification that aims to improve reconstruction quality of highly detailed signals.

To do so, we must analyze the spectrum of an INR. This can be done by a theorem of \cite{novello2024taming}, which provides a trigonometric expansion that facilitates this analysis.


\begin{theorem}\label{t-expansion}
    The neuron $h_j^{i+1}$ admits the following amplitude-phase expansion:
    \emph{\begin{align}\label{e-neuron-expansion}
        h_j^{i+1}\!({\x})\! =\!\! \sum_{\textbf{k} \in \mathbb{Z}^{n_i}}\!\! \alpha_{\textbf{k}}(\W^i_{j*}) \, \sin\!\left( \langle \textbf{k}, \textbf{y}^i \rangle + b_j^{i+1} \right),\quad \text{where}\quad 
        \abs{\alpha_{\textbf{k}}(\W^i_{j*})}\!\leq \prod_{l}\frac{1}{|k_l|!}\left(\! \frac{|W^{i+1}_{jl}|}{2} \!\right)^{|k_l|}
    \end{align}}
Here, \emph{$\alpha_\textbf{k}(\W^i_{j*})=\prod_{l}J_{k_l}(W_{jl}^{i+1})$} is the product of Bessel functions.
\end{theorem}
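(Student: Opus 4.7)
The plan is to prove the expansion via the Jacobi--Anger identity applied coordinatewise, and then to bound the Fourier coefficients using the standard power-series estimate on Bessel functions of the first kind.

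First I would move to complex exponentials. Writing $\sin\theta = \operatorname{Im}(e^{i\theta})$, the definition \eqref{eq: neuron} gives
\begin{align*}
h_j^{i+1}(\x) = \operatorname{Im}\!\left( e^{i b_j^{i+1}} \prod_{l=1}^{n_i} e^{\,i\, W^{i+1}_{jl}\, \sin(y_l^i)} \right).
\end{align*}
Next I would apply the Jacobi--Anger expansion $e^{iz\sin\theta} = \sum_{k\in\Z} J_k(z)\, e^{ik\theta}$ to each of the $n_i$ factors, which is valid for every real $z$ and $\theta$, with absolute and uniform convergence in $\theta$ on compacts (inherited from the entire series for $J_k$).

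The key algebraic step is then to distribute the product of the $n_i$ series into one multi-indexed series: since each Jacobi--Anger series converges absolutely, Fubini/Tonelli justifies
\begin{align*}
\prod_{l=1}^{n_i} \sum_{k_l\in\Z} J_{k_l}(W^{i+1}_{jl})\, e^{i k_l y_l^i}
= \sum_{\textbf{k}\in\Z^{n_i}} \Bigl(\prod_{l} J_{k_l}(W^{i+1}_{jl})\Bigr) e^{i\langle \textbf{k}, \textbf{y}^i\rangle}.
\end{align*}
Multiplying by $e^{ib_j^{i+1}}$ and taking the imaginary part recovers the expansion with $\alpha_{\textbf{k}}(\W^{i+1}_{j*}) = \prod_l J_{k_l}(W^{i+1}_{jl})$ (modulo the slight indexing typo in the statement).

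For the amplitude bound I would invoke the classical estimate $|J_k(z)|\le \tfrac{1}{|k|!}(|z|/2)^{|k|}$ for all $k\in\Z$ and $z\in\R$. For $k\ge 0$ this is immediate from the defining power series $J_k(z)=\sum_{m\ge 0}\frac{(-1)^m}{m!(m+k)!}(z/2)^{2m+k}$ by bounding the alternating tail via absolute values and using $(m+k)!\ge k!\,m!\cdot$ (geometric tail), which collapses to $(|z|/2)^{k}/k!$ after recognizing the resulting sub-series; for $k<0$ the identity $J_{-k}(z)=(-1)^k J_k(z)$ reduces the case. Taking the product over $l$ yields the claimed bound
\begin{align*}
|\alpha_{\textbf{k}}(\W^{i+1}_{j*})| \le \prod_{l}\frac{1}{|k_l|!}\Bigl(\frac{|W^{i+1}_{jl}|}{2}\Bigr)^{|k_l|}.
\end{align*}

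The only real subtlety, and the step I would be most careful about, is the interchange of the product of infinite sums with the imaginary part operator; this is handled by the absolute summability coming from the Bessel bound just established, which makes the multi-indexed series $\sum_{\textbf{k}} \prod_l |J_{k_l}(W^{i+1}_{jl})|$ converge to $\prod_l \sum_{k_l}|J_{k_l}(W^{i+1}_{jl})| \le \prod_l e^{|W^{i+1}_{jl}|/2}\cdot(\text{finite factor})$, so the rearrangement and the extraction of $\operatorname{Im}$ are fully justified. Everything else is bookkeeping.
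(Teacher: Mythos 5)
Your proof is correct, but it takes a genuinely different route from the paper's. The paper works entirely in real arithmetic: it proves the sine and cosine expansions simultaneously by induction on the number of summands $n$, using the angle-addition formulas to peel off the $n$-th term $W_{jn}\sin(y_n)$ at each step, with the base case $n=1$ coming from the real Jacobi--Anger identities (A\&S 9.1.42--43) together with $J_{-k}(u)=(-1)^kJ_k(u)$ and product-to-sum formulas. You instead pass to complex exponentials, write $h_j^{i+1}(\x)=\operatorname{Im}\bigl(e^{ib_j^{i+1}}\prod_l e^{iW^{i+1}_{jl}\sin(y_l^i)}\bigr)$, apply the exponential Jacobi--Anger identity factorwise, and distribute the product of series into a single multi-indexed series by absolute summability before taking imaginary parts. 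Your version buys a shorter, induction-free argument and makes the convergence/rearrangement issues explicit (the paper's proof silently reorders conditionally presented double sums, which your Fubini step handles cleanly); the paper's version buys elementarity, staying with real trigonometric identities throughout and producing the cosine expansion as a byproduct needed for its induction. One small point you should make explicit: extracting $\operatorname{Im}$ term-by-term yields $\alpha_{\textbf{k}}\sin(\langle\textbf{k},\textbf{y}^i\rangle+b_j^{i+1})$ only because each $\alpha_{\textbf{k}}=\prod_l J_{k_l}(W^{i+1}_{jl})$ is real, which holds since $J_k$ maps reals to reals. The final amplitude bound is the same classical estimate $|J_k(z)|\le\frac{1}{|k|!}(|z|/2)^{|k|}$ used in the paper (your power-series justification is sketched at about the same level of detail as the paper's), and you correctly note that the statement's $\alpha_{\textbf{k}}(\W^i_{j*})$ should read $\W^{i+1}_{j*}$.
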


This result shows that the composition of sinusoidal layers generates new frequencies of the form $\langle \textbf{k}, \omega \rangle$, depending solely on the input frequencies $\omega$, with phase shifts determined by the biases $\{\varphi, \textbf{b}^i\}$.
Additionally, the amplitudes $\alpha_{\textbf{k}}$ depend exclusively on the hidden weight matrices $\textbf{W}^i$.
Thus, the generated frequencies are governed by the input embedding, while the hidden parameters control the amplitudes and phase shifts.
Moreover, from \Eqref{e-neuron-expansion} we observe that $$\textbf{h}^0(\textbf{x})=\left[\sum_{\textbf{k}\in\Z^{n_0}}\alpha_\textbf{k}\sin\left(\langle\textbf{k}, \omega\rangle\textbf{x} + b_j \right)\right]_j \text{ with bias } b_j =\langle\textbf{k}, \varphi\rangle + b^1_j.$$ Thus, adding an input neuron with frequency $\omega'$ expands the layer spectrum from $\{\langle\textbf{k}, \omega\rangle\}_\textbf{k}$ to $\{\langle\textbf{k}, \omega\rangle+l\cdot\omega'\}_{\textbf{k}, l}$.
Since the frequencies in the input layer determine those appearing in the network, the densification of the input layer greatly increases the expressiveness of the overall network.

However, identifying new frequencies to be added is fairly nontrivial.
Fortunately, Theorem~\ref{t-expansion} also sheds light on this:
the $j$-th column of $\W^1$ influences the value of any amplitude $\alpha_\textbf{k}$ related to the generated frequency $\textbf{k}\cdot\omega$, with $k_j\neq0$. In particular, let us consider the case of $\textbf{k}=\textbf{e}_j$, where $\mathbf{e}_j$ denotes the $j$-th canonical basis vector. If $\|\W^1_{*j}\|$ is small, then by standard properties of Bessel functions $\alpha_{\textbf{e}_j}=J_1(W^1_{ij})$ must also be small and $\alpha_{2\textbf{e}_j}=J_2(W^1_{ij})$ is negligible. Conversely, when $\|\W^1_{*j}\|$ is large, $\alpha_{2\textbf{e}_j}$ carries non-negligible energy and the generated frequency $\dotprod{2\textbf{e}_j, \omega} = 2\omega_j$ may contribute to the reconstruction of the target signal.
However, for it to indeed strongly influence reconstruction, the values of $\W^i_{*j}$ must increase, which happens slowly. So, to accelerate the training of such frequencies, we first identify highly contributing neurons by assessing the magnitudes of their weights and initialize novel input frequencies accordingly; to be precise, for every highly contributing $\omega_j$ we introduce a new frequency $2\omega_j$, enabling it to influence the network spectrum more easily.

The corresponding new column in the hidden matrix $\W^1$ is initialized with random values drawn from a uniform distribution in the range $[-10^{-4},10^{-4}]$, ensuring a stable start for training.
Finally, the network is retrained to fine-tune all parameters, allowing it to adapt to the extended frequency spectrum and fully leverage the increased representational capacity.

\subsection{Pruning}

Determining an appropriately sized model capable of representing the target signal with quality is a key challenge when training sinusoidal INRs.
Typically, large architectures are employed to ensure reconstruction accuracy, sacrificing model compactness.
To address this, we design a pruning procedure that detects and removes redundant neurons during training.

First, we employ a magnitude-based criterion to identify uninformative neurons, a common strategy in classical network pruning. We now provide a formal justification of its validity for INRs: in sinusoidal MLPs, pruning neurons induces only a bounded perturbation to the overall function. This perturbation depends on the $\infty$-operator norms of the parameter changes and the norms of the subsequent layers.

\begin{theorem}
\label{p-criterion}
Let $f$ be a sinusoidal INR of depth $d$, and let $\widetilde{f}$ be the network obtained by perturbing the $k$-th hidden layer weights and biases to $\widetilde{\mathbf{W}}^k$ and $\widetilde{\mathbf{b}}^k$. Then,
\begin{align*}
    \sup_{x} \left\| f(x) - \widetilde{f}(x) \right\|_\infty
    \leq \left( \left\|\mathbf{W}^k - \widetilde{\mathbf{W}}^k\right\|_\infty + \left\|\mathbf{b}^k - \widetilde{\mathbf{b}}^k\right\|_\infty \right) \|\mathbf{L}\|_\infty \prod_{i=k+1}^d \|\mathbf{W}^i\|_\infty.
\end{align*}
\end{theorem}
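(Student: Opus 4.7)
The plan is to propagate a localized perturbation through the network by exploiting two simple facts: $\sin$ is $1$-Lipschitz componentwise, and the output of any hidden sinusoidal layer satisfies $\|\sin(\cdot)\|_\infty \le 1$. Throughout, $\|\cdot\|_\infty$ on matrices denotes the induced $\infty \to \infty$ operator norm (maximum absolute row sum), so that $\|\mathbf{M}\mathbf{v}\|_\infty \le \|\mathbf{M}\|_\infty \|\mathbf{v}\|_\infty$.

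First, I would fix an input $\mathbf{x}$ and note that since only layer $k$ is perturbed, the hidden activations $\mathbf{y}^{k-1} := \mathbf{S}^{k-1}\circ\cdots\circ\mathbf{S}^0(\mathbf{x})$ agree between $f$ and $\widetilde{f}$. Because $k \ge 1$, this vector is the output of a sinusoidal layer and therefore $\|\mathbf{y}^{k-1}\|_\infty \le 1$. Applying the componentwise Lipschitz bound $|\sin(a) - \sin(b)| \le |a-b|$ at layer $k$,
\begin{align*}
\bigl\| \mathbf{S}^k(\mathbf{y}^{k-1}) - \widetilde{\mathbf{S}}^k(\mathbf{y}^{k-1}) \bigr\|_\infty
&\le \bigl\| (\mathbf{W}^k - \widetilde{\mathbf{W}}^k)\mathbf{y}^{k-1} + (\mathbf{b}^k - \widetilde{\mathbf{b}}^k) \bigr\|_\infty \\
&\le \|\mathbf{W}^k - \widetilde{\mathbf{W}}^k\|_\infty + \|\mathbf{b}^k - \widetilde{\mathbf{b}}^k\|_\infty =: \delta.
\end{align*}

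Next, I would propagate $\delta$ through the remaining sinusoidal layers. For any layer $i > k$ and any two inputs $\mathbf{u}, \mathbf{v}$, the same $1$-Lipschitz property of $\sin$ gives
\begin{align*}
\bigl\| \mathbf{S}^i(\mathbf{u}) - \mathbf{S}^i(\mathbf{v}) \bigr\|_\infty
\le \bigl\| \mathbf{W}^i(\mathbf{u} - \mathbf{v}) \bigr\|_\infty
\le \|\mathbf{W}^i\|_\infty \, \|\mathbf{u} - \mathbf{v}\|_\infty,
\end{align*}
so composing layers $k+1, \ldots, d$ multiplies the perturbation by $\prod_{i=k+1}^d \|\mathbf{W}^i\|_\infty$. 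Finally, since $\mathbf{L}$ is affine, its bias cancels and its linear part contributes one more factor $\|\mathbf{L}\|_\infty$. Chaining these three stages and taking the supremum over $\mathbf{x}$ yields the claimed bound.

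This argument is essentially routine; the only point requiring care is the bookkeeping of norms. I would verify that the matrix norm used throughout is consistent (the $\ell_\infty \to \ell_\infty$ operator norm, for which $\sup_{\|\mathbf{v}\|_\infty \le 1}\|\mathbf{M}\mathbf{v}\|_\infty = \|\mathbf{M}\|_\infty$), and that the assumption $\|\mathbf{y}^{k-1}\|_\infty \le 1$ is legitimately available because layer $k$ is \emph{hidden} (so its input is a sinusoidal activation, not the raw coordinate $\mathbf{x}$). No deeper obstacle arises, since the result is a Lipschitz stability statement and the sinusoidal structure provides exactly the bounded, $1$-Lipschitz activations the argument needs.
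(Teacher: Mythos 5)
Your proof is correct and follows essentially the same route as the paper's: the $1$-Lipschitzness of $\sin$ and the induced $\ell_\infty\!\to\!\ell_\infty$ operator norm give the product of $\|\mathbf{W}^i\|_\infty$ and $\|\mathbf{L}\|_\infty$, and the bound $\|(\mathbf{S}^{k-1}\circ\cdots\circ\mathbf{S}^0)(\mathbf{x})\|_\infty\le 1$ absorbs the $\mathbf{W}^k-\widetilde{\mathbf{W}}^k$ term. The only cosmetic difference is that you propagate the perturbation forward from layer $k$ while the paper peels off the outer layers first; the content is identical.
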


Theorem~\ref{p-criterion} formally guarantees that small modifications to a layer’s parameters---such as pruning neurons with small outgoing weights---induce only proportionally small changes to the network’s output.
This justifies magnitude-based pruning both intuitively and theoretically. 
However, training directly with the reconstruction loss $\mathcal{L}_\text{data}$ (e.g. MSE) often leads to relatively few truly redundant neurons, even in overparametrized architectures. For better pruning, we employ a targeted weight decay (TWD) strategy that reduces the contribution from near-redundant neurons, turning them truly redundant.
It consists of training the network $f$ with the loss function, 
\begin{align}
    \mathcal{L}_{\alpha, \mathcal{I}} = \mathcal{L}_{\text{data}} + \alpha\sum_{j\in\mathcal{I}}\|\W^{i+1}_{*j}\|_1, \quad \text{with} \quad \alpha \in [0,1),
\end{align}
where $\mathcal{I}=\text{TopK}\left(\left[-\|\W^{i+1}_{*j}\|_1\right]_j\right)$ are the $K$ indices of the neurons with the smallest column norm.

Our procedure uses TWD to isolate low-impact neurons, ensuring that pruning remains consistent with the theoretical stability.
Then, we select the neurons to remove by thresholding small $\ell_1$ norms, e.g., pruning $h^i_j$ if $\|\textbf{W}^{i+1}_{*j}\|_1 \!=\! \|\textbf{W}^{i+1} - \widetilde{\textbf{W}}^{i+1}\|_{\infty} \!\leq\! \epsilon$ (where $\widetilde{\textbf{W}}^{i+1}$ denotes the altered weight~matrix), and fine-tune the network to recover performance.
As illustrated in Figure~\ref{fig: overview}, pruning a neuron $h^i_j(\textbf{x})$ involves removing its outgoing connections.
In practice, we mask only the entries of the $j$-th column $\W^{i+1}_{*j}$, which implicitly leaves unused the row $\W^i_{j*}$ and bias $b^i_j$. 
Note that pruning the input layer may have greater impact on the reconstruction since we are deleting an input frequency; that is, we are eliminating many generated frequencies from the network spectrum.

\section{Experiments}
\label{sec:experiments}

We evaluate \method{} on adaptive training across three tasks: image fitting, surface reconstruction (SDFs), and novel view synthesis with NeRFs. Experiments are conducted on the DIV2K~\citep{agustsson2017div2k}, Stanford Repository~\citep{curless1996volumetric}, and NeRF Synthetic~\citep{mildenhall2021nerf} datasets. 
We also study \method{} in a setup where the final architecture is fixed, demonstrating that our training procedure can improve reconstruction quality even when the reduced small architecture is known in advance. 
Finally, we perform ablation studies to validate the design choices underlying our method.

All models are implemented in PyTorch~\citep{paszke2019pytorch} and optimized with Adam~\citep{kingma2015adam}.
For simplicity, we denote a sinusoidal MLP architecture by $[n_1, ..., n_{d+1}]$, where $d$ is the number of hidden layers and $n_i$ is the number of neurons in the $i$-th layer.


\paragraph{Comparison with standard training.} We compare \method{} against a baseline defined by the original, large initial architecture (overparametrized) trained with the standard neural network training pipeline, showing that \method{} can reduce model size while maintaining reconstruction quality by finding more appropriate input frequencies.
We evaluate this on images, SDFs, and NeRFs, adopting commonly used architectures for each task (SIREN and FINER).
Table~\ref{tab:image_sdf_nerf} shows that \method{} achieves substantial reductions in model size while maintaining reconstruction quality, and in several cases even improving~it.
\renewcommand{\arraystretch}{1.1}
\begin{table}[h!]
\centering
\setlength{\tabcolsep}{3pt}
\footnotesize
\caption{\textbf{Our method fits a compact INR to the target signal while preserving accuracy.}
We evaluate \method{} (`Ours') against an overparametrized INR trained with the standard training pipeline (`Large') on images (with model size [512, 256, 256]), SDFs (with architecture $[256, 256, 256]$), and NeRF  tasks (with size $[256, 128, 128]$), reporting PSNR and Chamfer Distance ($\times 10^2$).
\method{} enables a strong reduction in model size, while preserving or even improving quality.
}
\label{tab:image_sdf_nerf}
\begin{tabular}{llll|llll|llll}
\hline
\textbf{\begin{tabular}[c]{@{}l@{}}Imgs\\ \textsuperscript{(Div2K)}\end{tabular}} &
  \multicolumn{1}{l}{\textbf{\begin{tabular}[c]{@{}l@{}}Large \\ PSNR\end{tabular}}} &
  \multicolumn{1}{l}{\textbf{\begin{tabular}[c]{@{}l@{}}Ours \\ PSNR\end{tabular}}} &
  \multicolumn{1}{l|}{\textbf{\begin{tabular}[c]{@{}l@{}}Size\\  reduct.\end{tabular}}} &
  \textbf{\begin{tabular}[c]{@{}l@{}}SDFs\\ \textsuperscript{(Stanford)}\end{tabular}} &
  \multicolumn{1}{l}{\textbf{\begin{tabular}[c]{@{}l@{}}Large\\ CD\end{tabular}}} &
  \multicolumn{1}{l}{\textbf{\begin{tabular}[c]{@{}l@{}}Ours \\ CD\end{tabular}}} &
  \multicolumn{1}{l|}{\textbf{\begin{tabular}[c]{@{}l@{}}Size\\ reduct.\end{tabular}}} &
  \textbf{\begin{tabular}[c]{@{}l@{}}NeRF\\ \textsuperscript{(Synthetic)}\end{tabular}} &
  \multicolumn{1}{l}{\textbf{\begin{tabular}[c]{@{}l@{}}Large\\ PSNR\end{tabular}}} &
  \multicolumn{1}{l}{\textbf{\begin{tabular}[c]{@{}l@{}}Ours\\ PSNR\end{tabular}}} &
  \multicolumn{1}{l}{\textbf{\begin{tabular}[c]{@{}l@{}} Size\\ reduct.\end{tabular}}} \\ \hline
\#00 & 31.96 & 31.56 & 35.89\% & Armadillo   & 0.62 & 0.63 & 73.30\% & Lego      & 25.72 & 25.30  & 35.00\% \\
\#01 & 37.93 & 35.63 & 65.28\% & Bunny       & 0.76 & 0.71 & 72.33\% & Materials & 23.71 & 23.62 & 30.86\% \\
\#02 & 30.76 & 29.17 & 52.39\% & Dragon      & 0.73 & 0.61 & 70.33\% & Ficus     & 24.23 & 24.82 & 26.83\% \\
\#03 & 37.40  & 35.04 & 56.80\% & Buddha & 0.59 & 0.56 & 41.27\% & Hotdog    & 29.69 & 28.68 & 33.14\% \\
\#04 & 33.88 & 31.09 & 60.08\% & Lucy        & 0.92 & 0.58 & 52.50\% & Drums     & 22.17 & 22.02 & 39.09\% \\ \hline
\end{tabular}
\end{table}

\pagebreak

\begin{wraptable}[9]{r}{6.7cm}
\centering
\small
\caption{\textbf{Comparison of pruning criteria.} Results are on the image representation task.}
\label{tab:sota}
\begin{tabular}{l|ll}
\hline
\textbf{Method} & \textbf{PSNR↑} & \textbf{SSIM↑} \\ \hline
Baseline & 34.60 ± 3.82          & 0.92 ± 0.03 \\
DepGraph & 27.56 ± 2.12          & 0.82 ± 0.04 \\
RigL     & 34.29 ± 3.37          & \textbf{0.95 ± 0.01} \\
AIRe (ours)     & \textbf{37.07 ± 3.74} & \textbf{0.95 ± 0.01} \\ \hline
\end{tabular}
\end{wraptable}
\paragraph{Comparison against existing pruning baselines} are provided in Table~\ref{tab:sota}, for the task of image representation using the same configuration as in Table~\ref{tab:image_sdf_nerf}, with a SIREN architecture. 
For this comparison, we consider two model-agnostic pruning methods with publicly available implementations, DepGraph~\citep{fang2023depgraph} and RigL~\citep{evci2020rigging}, as well as a baseline given by training a reduced architecture from scratch with standard training. The pruning rate of each method is set to approximately 25\% of the original parameters, and we follow the hyperparameter choices reported in the respective papers.
%
AIRe consistently outperforms these pruning methods, demonstrating its effectiveness for INR architecture adaptation over training.

\subsection{\method{} vs. small networks}

\method{} starts with a \textbf{large} architecture and progressively reduces its size during training, resulting in a \textbf{small} network. To evaluate how effectively \method{} leverages its architectures, we compare it against standard training applied directly to both the initial (large) architecture and the final (small) one. We conduct this evaluation for image fitting (DIV2K) and SDF reconstruction (Stanford Repository).

For the SDF reconstruction task, we follow the implementation in \citep{novello2022exploring}, training each network for $10^3$ epochs, sampling $10^4$ on-surface points and $10^4$ off-surface points uniformly.
Meshes are extracted from the trained SDFs via marching cubes with a resolution of $512^3$, and all surfaces were normalized to $[-1,1]^3$. For evaluation, we report the number of network parameters (Params) and the Chamfer Distance (CD) between reconstructed and ground-truth surfaces. We also evaluate \method{} without densification, as SDFs typically contain less details than other applications.
\renewcommand{\arraystretch}{1.1}
\begin{table}[h!]
\centering
\setlength{\tabcolsep}{3pt}
\footnotesize
\caption{\textbf{\method{} vs. directly trained large and small networks.} 
We compare \method{} with standard training applied to large and small architectures on both SDF reconstruction (Stanford) and image fitting (DIV2K). 
Metrics are CD ($\times 10^2$) for SDFs and PSNR for images, along with parameter reduction relative to the large model. 
\method{} achieves accuracy comparable to or better than the large network while using the same reduced parameter budget as the small one.}
\label{tab: exp_surfaces}
\begin{tabular}{lllc|llcc}
\hline
\textbf{\begin{tabular}[c]{@{}l@{}}Model\\ \textsuperscript{(SDFs)} \end{tabular}} &
  \textbf{Variant} &
  \textbf{CD (×10²) ↓} &
  \textbf{\begin{tabular}[c]{@{}l@{}}Size\\ reduct. ↓\end{tabular}} &
  \textbf{\begin{tabular}[c]{@{}l@{}}Model\\ \textsuperscript{(Images)} \end{tabular}} &
  \textbf{Variant} &
  \textbf{PSNR $\uparrow$} &
  \textbf{\begin{tabular}[c]{@{}l@{}}Size\\ reduct. ↓\end{tabular}} \\ \hline
\multirow{3}{*}{\begin{tabular}[c]{@{}l@{}}SIREN\\ \end{tabular}}  & Large & 0.65 ± 0.11 & -       & \multirow{3}{*}{\begin{tabular}[c]{@{}l@{}}SIREN\\ \end{tabular}}  & Large & \textbf{39.59 ± 3.30} & -        \\
                       & Small & 0.89 ± 0.09 & 83.96\% &                      & Small & 34.60 ± 3.82  & 24.95\%  \\
                       & Ours  & \textbf{0.64 ± 0.03} & 83.96\% &             & Ours  & 37.07 ± 3.74 & 24.95\%  \\[.15cm]
\multirow{3}{*}{\begin{tabular}[c]{@{}l@{}}FINER\\ \end{tabular}} & Large & 2.14 ± 0.41 & -        & \multirow{3}{*}{\begin{tabular}[c]{@{}l@{}}FINER\\ \end{tabular}} & Large & 38.77 ± 2.98 & - \\
                       & Small & 5.08 ± 3.51 & 83.96\% &                      & Small & 38.87 ± 3.44 & 24.95\%  \\
                       & Ours  & \textbf{0.88 ± 0.15} & 83.96\% &             & Ours  & \textbf{39.91± 3.89}  & 24.95\%        \\
\hline
\end{tabular}
\end{table}

For training, we start with a large network $[256, 256, 256]$, trained from scratch for 200 epochs. We then select $192$ neurons from both hidden layers and continue training with targeted weight decay (TWD) for 500 epochs. Finally, the selected neurons are pruned, and the resulting smaller network $[64, 64, 256]$ is retrained for 300 epochs.
Table~\ref{tab: exp_surfaces}(left) shows that \method{} provides a better SDF reconstruction than the large network in all cases.
We also highlight that our approach obtains similar or better accuracy compared to the initial, large network trained from scratch while using roughly between a third and a sixth of the network parameters for surface representation.
Aditionally, we found that AIRe has a comparable time overhead ($76.8s$) compared to the large SIREN model ($76.0s$). Moreover, even when training a small architecture during $83.2s$, it performs worse than \method{} with $0.86\times10^2$ for CD metric.  
Figure~\ref{fig: exp_surfaces} shows some qualitative comparisons of \method{} and the small architecture with standard training on the Armadillo, Buddha, and Lucy models, showing that \method{} provides, in general, a lower (bluer) distance from the ground-truth surface.

\begin{figure}[h!]
    \centering
    \includegraphics[width=\linewidth]{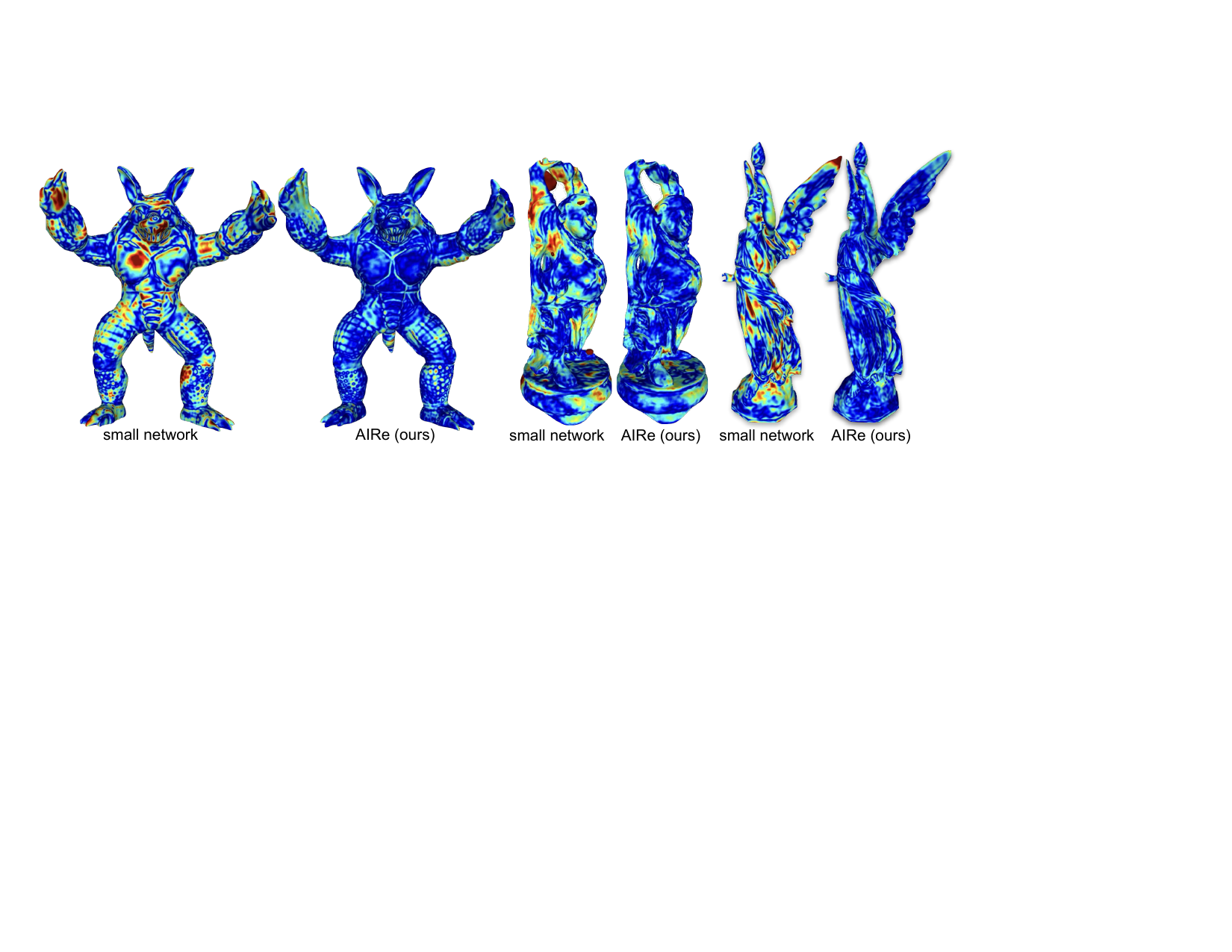}     
    \caption{Qualitative comparison of SDF reconstructions on the Armadillo, Buddha, and Lucy models using a SIREN with $\omega_0=60$ and small network size $[64, 64, 256]$. Left: results of training the final small network directly. Right: results of \method{}. Colors indicate the distance from the ground-truth surface, from dark blue (0) to dark red ($\geq 0.01$). \method{} produces reconstructions that are consistently closer to the ground truth than those obtained by training the small network from scratch.}
    \label{fig: exp_surfaces}
\end{figure}

Additionally, AIRe mitigates divergence during the training of SDF models, as illustrated in Figure~\ref{fig: exp_surfaces_divergence}. We illustrate this by initializing a large network of size $[256, 256, 256, 256]$ and training it on the Armadillo for half the epochs with $\omega_0 = 60$ and small network architecture $[64, 64, 256]$. Under standard training, the large network diverges, producing reconstructions with severe noise and artifacts. In contrast, \method{} yields a more accurate reconstruction despite using less than half the parameters of the initial model.
\begin{figure}[h!]
    \centering
    \includegraphics[width=\linewidth]{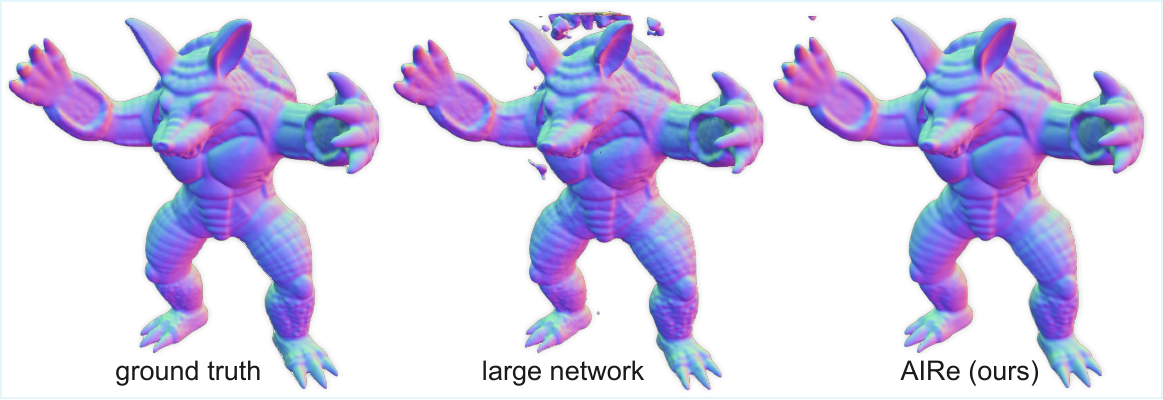}     
    \caption{Qualitative comparison on the Armadillo. Left: ground truth. Middle: standard training of the large network, which diverges and produces noisy artifacts. Right: \method{}, which avoids divergence and yields a cleaner reconstruction with fewer parameters.}
    \label{fig: exp_surfaces_divergence}
\end{figure}

\pagebreak

For the evaluation on the image fitting task, we use the FINER subset of the DIV2K dataset, randomly selecting $90\%$ of the pixels of each image for training and using the remaining $10\%$ for testing. Training is performed with Mean Square Error (MSE) loss, a batch size of $65{,}536$ pixels, and evaluation is based on Peak Signal-to-Noise Ratio (PSNR). All experiments use sinusoidal MLPs with $\omega_0 = 30$ trained for $5000$ epochs.  
For \method{}, we first train for $250$ epochs with MSE to capture low-frequency information, then add $128$ new input neurons and fine-tune for $2000$ epochs. Next, we train for $2250$ epochs with TWD, prune $384$ input neurons, and fine-tune the resulting network for an additional $500$ epochs. The resulting small network of size $[256, 512, 512]$ is compared against a model of the same size trained from scratch with MSE for $5000$ epochs.  

Table~\ref{tab: exp_surfaces} (right) shows that \method{} applied to SIREN and FINER networks achieves better convergence than standard training applied directly to either large or small networks. \method{} improves mean accuracy by $2.47$ dB on SIREN and $1.04$ dB on FINER, consistently outperforming standard MSE training. These results demonstrate that \method{} effectively transfers information from the overparameterized model to its small counterpart.



\subsection{Ablations}
\label{chap: experiments_pruning}

\paragraph{Effect of varying pruning rate.} We now ablate key design choices of \method{}, focusing on the rate of pruning and densification during training. 
First, we analyze the effect of pruning on reconstruction quality. 
We compare the accuracy drop of an adapted INR relative to a pre-trained network of size $[512, 512, 512]$ (528K parameters), which achieves a PSNR of $43.67$ dB (gray point in Figure~\ref{fig:prun_evol_cols}, right). 
We apply \method{} to the same architecture, starting with standard training for $2250$ epochs, followed by selecting $p\%$ of neurons from each hidden layer to prune ($p \in \{0.2, 0.4, 0.6, 0.8\}$) and training with TWD for another $2250$ epochs. 
Finally, we prune the selected neurons and fine-tune the resulting network for $500$ epochs, totaling $5000$ epochs of adaptation.



\begin{figure}[h]
    \hfill
    \begin{subfigure}{0.75\columnwidth}%
         \centering
         \includegraphics[width=\columnwidth]{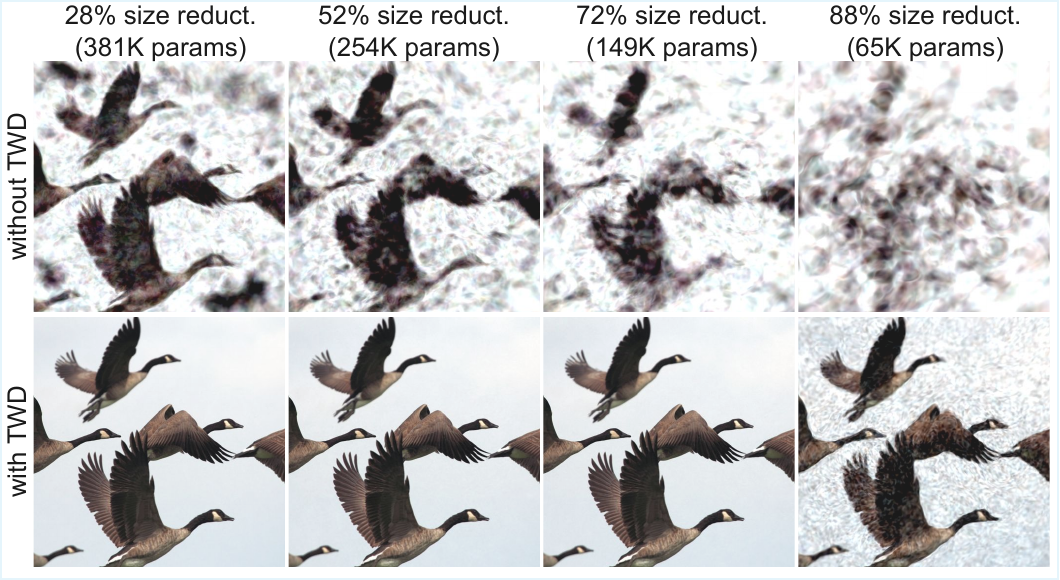}
     \end{subfigure}%
     \hfill%
     \begin{subfigure}{0.25\columnwidth}%
        \centering
        \setlength{\tabcolsep}{2pt}%
        \begin{tabular}{ll|ll}
        \hline
          &
          \textbf{\begin{tabular}[c]{@{}l@{}}Size\\ reduct.\end{tabular}} &
          \textbf{\begin{tabular}[c]{@{}l@{}}w/o \\{\scriptsize TWD}\end{tabular}} &
          \textbf{\begin{tabular}[c]{@{}l@{}}w/ \\{\scriptsize TWD}\end{tabular}}
          \\ \hline
        \multirow{4}{*}{\rotatebox[origin=c]{90}{\textbf{w/o FT}}} & 28\% & 14.35 & 43.43 \\
         & 48\% & 15.61 & 40.28 \\
         & 72\% & 14.07 & 41.25 \\
         & 88\% & 14.00 & 21.60 \\[.15cm]
        \multirow{4}{*}{\rotatebox[origin=c]{90}{\textbf{w/ FT}}} & 28\% & 41.24 & 43.57 \\
         & 48\% & 42.93 & 43.54 \\
         & 72\% & 41.98 & 42.75 \\
         & 88\% & 37.28 & 38.25 \\
        \hline
        \end{tabular}
        \vspace{0.5em} 
     \end{subfigure}%
     \hfill
\caption{\textbf{TWD reduces the dependence of finetune (FT) when pruning.} TWD effectively transfers information before pruning. 
Left: qualitative results with $28\%, 52\%, 72\%$, and $88\%$ of parameters pruned. 
The first row shows results without TWD, and the second row with TWD. 
Right: Table with the PSNR values for each case.}
\label{fig:prun_evol_cols}
\end{figure}

\begin{wraptable}[8]{r}{8.cm}
\centering
\footnotesize
\vspace{-0.4cm}
\caption{\textbf{Effect of pruning and densification on SIREN and FINER networks} (DIV2K).}
\label{tab:prune-densify-ablation}
\begin{tabular}{l|ll}
\hline
\textbf{Method} & \textbf{SIREN PSNR ↑} & \textbf{FINER PSNR ↑} \\ \hline
Small         & 36.44 ± 4.20          & 40.84 ± 3.85          \\
Prune         & 37.58 ± 3.77          & 41.80 ± 3.80          \\
Densify+Prune & \textbf{39.47 ± 4.31} & \textbf{41.88 ± 4.24} \\ \hline
\end{tabular}
\end{wraptable}
As shown in Figure~\ref{fig:prun_evol_cols}, TWD enables effective transfer of information to the remaining neurons so that a pruned network (without fine-tuning) with only $28\%$ of its weights still retains $92\%$ of the original network’s accuracy. 
In contrast, pruning without TWD leads to a severe degradation in image quality. After full training, \method{} achieves a quality drop of less than $2.1\%$ with just $28\%$ of the parameters, compared to a $3.8\%$ drop when TWD is removed from the pipeline.

\paragraph{With vs.\ without densification.} 
We ablate the role of densification in our pipeline using the same configuration as Table~2, training all models for $5000$ epochs on a subset of DIV2K. In Table~\ref{tab:prune-densify-ablation}, we compare:  
(i) a small architecture trained from scratch;  
(ii) a large model pruned (without densification) to match the small architecture; and
(iii) our proposed \method{} scheme, which iteratively adds and removes input neurons until matching the small architecture.  
Pruning alone yields a modest accuracy gain for SIREN, while the \emph{Densify+Prune} (\method{}) strategy provides a substantial boost. For FINER, pruning slightly improves reconstruction quality, but densification brings little benefit -- consistent with the fact that FINER models are already more expressive and less dependent on additional frequency capacity.

\paragraph{Pruning after densification vs. before densification.} 
Intuitively, increasing model capacity before removing redundancies should improve convergence. 
To verify this, we train a network of size $[256, 512, 256]$ and compare two schedules: pruning before densification and pruning after densification.  
For \textit{densify-then-prune}, we train for $400$ epochs with MSE, add $128$ input neurons, fine-tune for $200$ epochs, train with TWD for $200$ epochs, prune $50\%$ of the neurons in the second hidden layer, and fine-tune for $1200$ epochs.  
For \textit{prune-then-densify}, we train for $200$ epochs with MSE, continue for $200$ epochs with TWD, prune $50\%$ of the second hidden layer, fine-tune for $200$ epochs, add $128$ neurons, and finally fine-tune for $1400$ epochs.  
As a baseline, we also train a network with the final small network $[512, 256, 256]$ from scratch for $2000$ epochs.  

\begin{figure}[ht]
\centering
\begin{subfigure}[b]{\columnwidth}
    \centering
    \includegraphics[width=\columnwidth]{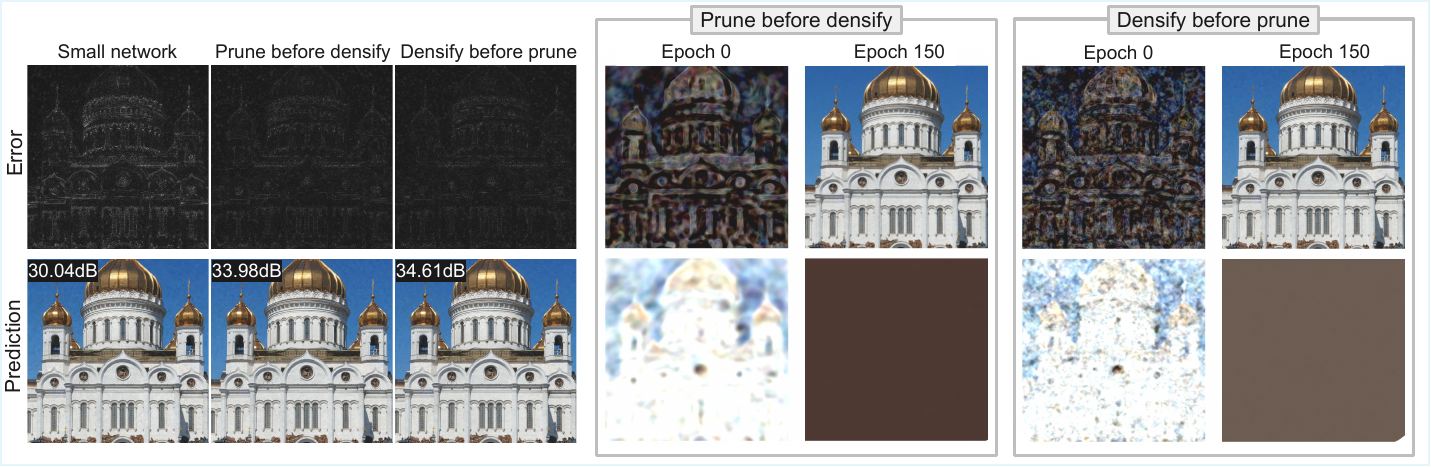}
\end{subfigure}
\caption{Comparison of training strategies with the small network, prune-before-densify, and densify-before-prune. Gray boxes show information transfer during TWD. The $1$st row displays inferences using the most contributive neurons, the $2$nd row shows reconstructions for the redundant neurons selected. Now, each column present the inferences after $t\in\{0, 150\}$ epochs of starting TWD regularization.}
\label{fig:dp-pd}
\end{figure}

The average PSNR of the small network is $30.06$ dB. 
By contrast, \textit{prune-then-densify} yields $33.41$ dB, while \textit{densify-then-prune} reaches $33.99$ dB, demonstrating that both strategies outperform standard training. 
Figure~\ref{fig:dp-pd} (left) shows that standard training produces worse error maps, while the purple boxes on the right illustrate that \textit{densify-before-prune} better preserves high-frequency details compared to \textit{prune-before-densify}.

\section{Conclusion}
\label{sec:conclusion}

We introduced \method{}, a dynamic training framework for implicit neural representations (INRs) that adaptively aligns network architecture with the complexity of the target signal. The framework integrates two complementary components: \emph{pruning}, which removes redundant neurons to mitigate overparameterization, and \emph{densification}, which expands the network’s expressivity by selectively introducing new input frequencies based on a principled spectral analysis.

Our approach contributes toward automating architecture adaptation in INR learning, offering a more efficient and flexible alternative to static design choices.
As future work, we aim to develop more advanced mechanisms for information transfer during pruning, extend our method to a broader class of architectures, and explore its applicability to more data modalities beyond images and surfaces.



\bibliographystyle{plainnat}
\bibliography{iclr2026_conference}

\appendix

\section{Proofs}

\subsection{Theorem 1}

In the main paper, we present an identity (Thrm 1) derived by \cite{novello2024taming}, which linearizes the $j$-th hidden neuron of the $(i+1)$-th layer, $h_j^{i+1}$.
Similar results have been presented in~\cite{yuce2022structured} for the case of shallow SIRENs.
The identity below extends this analysis to hidden neurons at arbitrary depths.
\begin{theorem}
    The hidden neuron $h_j^{i+1}$ admits the following amplitude-phase expansion:
    \emph{\begin{align}
        h_j^{i+1}({\x}) = \sum_{\textbf{k} \in \mathbb{Z}^{n_i}} \alpha_{\textbf{k}} \, \sin\left( \langle \textbf{k}, \textbf{y}^i \rangle + b_j^{i+1} \right),\qquad \text{where}\quad 
        |\alpha_{\textbf{k}}| \leq \prod_{l} \left( \frac{|W^{i+1}_{jl}|}{2} \right)^{|k_l|} \frac{1}{|k_l|!}.
    \end{align}}
    Here, $\alpha_\textbf{k}=\prod_{l=1}^{n_i}J_{k_l}(W_{jl}^{i+1})$ is the product of Bessel functions.
\end{theorem}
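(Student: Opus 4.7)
The plan is to reduce the statement to the classical Jacobi--Anger identity applied coordinatewise to the inner sinusoidal activations $\sin(y^i_l)$, then collapse the resulting product into a single sum over the multi-index $\mathbf{k}\in\mathbb{Z}^{n_i}$, and finally take the imaginary part to recover the outer sine.

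First, I would write out the definition $h_j^{i+1}(\mathbf{x}) = \sin\!\bigl(\sum_{l=1}^{n_i} W^{i+1}_{jl}\sin(y^i_l) + b_j^{i+1}\bigr)$ and pass to the complex exponential $\exp\!\bigl(i\sum_l W^{i+1}_{jl}\sin(y^i_l) + i b_j^{i+1}\bigr)$, of which $h_j^{i+1}$ is the imaginary part. The argument factorises as $e^{ib_j^{i+1}} \prod_l e^{i W^{i+1}_{jl}\sin(y^i_l)}$, which is the key observation that lets me apply a scalar identity one factor at a time.

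Next, I would invoke the Jacobi--Anger expansion $e^{iz\sin\theta} = \sum_{k\in\mathbb{Z}} J_k(z)\,e^{ik\theta}$ with $z = W^{i+1}_{jl}$ and $\theta = y^i_l$ for each $l$. Multiplying the $n_i$ resulting series together and using absolute convergence (which holds uniformly in $\mathbf{x}$ since $\sum_{k}|J_k(z)| < \infty$ for every real $z$) to interchange the product of sums with a sum over tuples, I obtain
\begin{equation*}
\prod_{l=1}^{n_i} e^{i W^{i+1}_{jl}\sin(y^i_l)} \;=\; \sum_{\mathbf{k}\in\mathbb{Z}^{n_i}} \Bigl(\prod_{l=1}^{n_i} J_{k_l}(W^{i+1}_{jl})\Bigr)\, e^{i\langle \mathbf{k}, \mathbf{y}^i\rangle}.
\end{equation*}
Multiplying both sides by $e^{i b_j^{i+1}}$, taking imaginary parts, and noting that $\alpha_{\mathbf{k}} := \prod_l J_{k_l}(W^{i+1}_{jl})$ is real, yields exactly the claimed expansion in terms of $\sin(\langle \mathbf{k}, \mathbf{y}^i\rangle + b_j^{i+1})$.

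Finally, for the amplitude bound I would apply the standard estimate $|J_k(z)| \le (|z|/2)^{|k|}/|k|!$, which follows directly from the power series representation $J_k(z) = \sum_{m\ge 0}\frac{(-1)^m}{m!(m+|k|)!}(z/2)^{|k|+2m}$ by triangle inequality and comparison to $(|z|/2)^{|k|}\cosh(|z|/2)/|k|!$, or more tightly by noting that the leading term dominates when one groups terms. Multiplying these bounds across $l$ gives $|\alpha_{\mathbf{k}}| \le \prod_l (|W^{i+1}_{jl}|/2)^{|k_l|}/|k_l|!$, as claimed. The only place requiring care is justifying the interchange of the product of infinite sums with the single multi-index sum, and this is handled cleanly by Fubini/Tonelli applied to the nonnegative majorant $\prod_l \sum_{k_l} |J_{k_l}(W^{i+1}_{jl})|$, which is finite.
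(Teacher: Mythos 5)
Your proof is correct, but it follows a genuinely different route from the paper's. The paper proves the expansion by induction on the input dimension $n_i$, working entirely with real trigonometry: the base case uses the real Jacobi--Anger expansions $\sin(W\sin y)=\sum_{k\ \mathrm{odd}}J_k(W)\sin(ky)$ and $\cos(W\sin y)=\sum_{k\ \mathrm{even}}J_k(W)\cos(ky)$ together with angle-sum identities and the symmetry $J_{-k}(u)=(-1)^kJ_k(u)$, and the induction step splits off the $n$-th coordinate via $\sin(A+B)$, carrying a parallel cosine identity through the whole argument. You instead pass to the complex exponential, factorize $e^{i\sum_l W_{jl}\sin(y_l)+ib_j}$ coordinatewise, apply the complex Jacobi--Anger identity to each factor, and take imaginary parts at the end. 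Your version is more compact (one identity suffices, no companion cosine statement, no induction) and it is the only one of the two that explicitly justifies the rearrangement of the product of infinite series into a single multi-index sum via the majorant $\prod_l\sum_{k_l}|J_{k_l}(W_{jl})|<\infty$; the paper's induction silently performs the same rearrangement at each step. The paper's real-variable route has the minor advantage of exhibiting the parity bookkeeping (which $\mathbf{k}$ contribute sines versus cosines) that the authors reuse informally elsewhere. One small caveat on your amplitude bound: a bare triangle inequality applied to the power series of $J_k$ yields the claimed bound only up to an extra factor (your $\cosh(|z|/2)$ remark), and the tight inequality $|J_k(z)|\le(|z|/2)^{|k|}/|k|!$ for real $z$ really rests on the alternating-series or integral-representation argument you allude to; the paper simply asserts this inequality with a citation, so you are at comparable rigor, but you should commit to one of the two justifications rather than offering both.
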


\noindent Before starting the proof, recall that we defined $h^{i+1}_j(\textbf{x}) = \sin\left(\sum_{l=1}^{n_i} W_{jl}^{i+1}\sin({y}^i_l)+b^{i+1}_j \right)$, with $\textbf{y}^i=\left[y_l^i\right]_l$ the linear, non-activated contribution of the $i$-th layer.
To simplify notation, we drop the indices $i$ and $i+1$ from $\W^{i+1}, \textbf{b}^{i+1}, \textbf{y}^i, \text{ and } n_i$.

\begin{proof}
The first part of the proof consists of verifying 
\begin{align}\label{e-perceptron_approx}
\begin{split}
&\sin\left(
    \sum_{l=1}^{n} W_{jl}\sin(
y_l
    )+ b_j
\right)
 = \sum_{\textbf{k}\in\Z^m}\alpha_\textbf{k}\sin\big(
    \dotprod{\textbf{k}, \textbf{y}}+ b_j
\big)  \quad \text{and}
\\
&\cos\left(\!
    \sum_{l=1}^{n} W_{jl}\sin(
y_l
    )+ b_j
\right)
= \sum_{\textbf{k}\in\Z^m}\alpha_\textbf{k}\cos\big(
    \dotprod{\textbf{k}, \textbf{y}}+ b_j
\big).
\end{split}
\end{align}

The proof is by induction in $n$. For the \textbf{base} case $n=1$,  we use the sum of angles identities and the Bessel function of the first kind properties (see~\cite[num. 9.1.42, 9.1.43]{abramowitz1964handbook}) to prove $\sin\left(W_{j1} \sin(y_1)+b_j\right)=\sum_{k\in \Z} J_k(W_{j1}) \sin(ky_1+b_j)$:
\begin{align*}
    \sin\big(W_{j1} \sin(y_1)+b_j\big)&=\sin\big(W_{j1} \sin(y_1)\big)\cos(b_j)+\cos\big(W_{j1} \sin(y_1)\big)\sin(b_j)\\
    &=\sum_{k\in\Z \text{ odd}} \!\!\!J_k(W_{1j}) \sin(ky_1)\cos(b_j) + \sum_{l\in\Z \text{ even}} \!\!\!J_l(W_{1j}) \cos(ly_1)\sin(b_j)\\
    &=\sum_{k\in\Z \text{ odd}} \!\!\!J_k(W_{j1}) \sin(ky_1+b_j) + \sum_{l\in\Z \text{ even}} \!\!\!J_l(W_{j1}) \sin(ly_1+b_j)\\
    &=\sum_{k\in\Z} J_k(W_{j1}) \sin(ky_1+b_j).
\end{align*}
In the third equality we combined the formula $\sin(u)\cos(v)=\frac{\sin(u+v)+\sin(u-v)}{2}$ and the fact that $J_{-k}(u)=(-1)^kJ_k(u)$ to rewrite the summations.
The proof of the cosine analogous expansion $\cos\big(W_{j1}\sin(y_1)+b_j\big)\!=\!\!\sum J_l(W_{j1}+b_j) \cos(ly_1)$ is similar.

Assume that \eqref{e-perceptron_approx} hold for $n-1$, with $n>1$, we prove that it also holds for $n$ (the \textbf{induction~step}). 
\begin{align*}
    \sin\left(\sum_{l=1}^{n} W_{jl}\sin(y_l)+ b_j\right)&=\sin\left(\sum_{l=1}^{n-1} W_{jl}\sin(y_l)+ b_j\right)\cos\big(W_{jn}\sin(y_n)\big)\\
    &+\cos\left(\sum_{l=1}^{n-1} W_{jl}\sin(y_l)+ b_j\right)\sin\big(W_{jn}\sin(y_n)\big)
    \\
    &=\sum_{\textbf{k}\in\Z^{n-1},\, l\in\Z \text{ even}}\alpha_\textbf{k}J_l(W_{jn})\sin\Big(\dotprod{\textbf{k}, \textbf{y}}+ b_j\Big)\cos(ly_n)\\
    &+\sum_{\textbf{k}\in\Z^{n-1},\, l\in\Z \text{ odd}}\alpha_\textbf{k}J_l(W_{jn})\cos\Big(\dotprod{\textbf{k}, \textbf{y}}+ b_j\Big)\sin(ly_n)\\
    &=\sum_{\textbf{k}\in\Z^n}\alpha_\textbf{k} \sin\Big(\dotprod{\textbf{k}, \textbf{y}}+ b_j\Big)
\end{align*}
We use the induction hypothesis in the second equality and an argument similar to the one used in the base case to rewrite the harmonic sum. Again, the cosine activation function case is analogous.

For the second part of the proof, we must prove the inequality in Equation~\eqref{t-expansion}. For that, note that $\alpha_\textbf{k}=\prod_{l=1}^nJ_{k_l}(W_{jl})$, and that 
\begin{align}
\label{e-bessel_inequality}
    |J_k(W_{jl})|<\frac{\left( \tfrac{|W_{jl}|}{2} \right)^k}{k!}, \quad k> 0,\quad W_{jl}>0
\end{align}
But this also holds for $W_{jl}\leq0$ since $|J_k(-u)|=|J_k(u)|$, and for $k\leq0$ as $|J_{-k}(u)|=|(-1)^kJ_k(u)|=|J_k(u)|$. Then, substituting \eqref{e-bessel_inequality} in $\alpha_\textbf{k}=\prod_{l=1}^nJ_{k_l}(W_{jl})$, we obtain the desired result.
\end{proof}

\subsection{Theorem 2}
\begin{theorem}
Let $f_\theta$ be a sinusoidal INR of depth $d$, and let $\widetilde{f_\theta}$ be the network obtained by perturbing the $k$-th hidden layer weights and biases to $\widetilde{\mathbf{W}}^k$ and $\widetilde{\mathbf{b}}^k$. Then,
\vspace{-0.2cm}
\begin{align*}
    \sup_{\mathbf{x}} \left\| f_\theta(\mathbf{x}) - \widetilde{f_\theta}(\mathbf{x}) \right\|_\infty
    \leq \left( \|\mathbf{W}^k - \widetilde{\mathbf{W}}^k\|_\infty + \|\mathbf{b}^k - \widetilde{\mathbf{b}}^k\|_\infty \right) \|\mathbf{L}\|_\infty \prod_{i=k+1}^d \|\mathbf{W}^i\|_\infty.
\end{align*}
\vspace{-0.2cm}
\end{theorem}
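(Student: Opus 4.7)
My plan is a straightforward layer-by-layer telescoping argument based on three elementary facts: (i) $\sin$ is $1$-Lipschitz componentwise, so $\|\sin(\mathbf{u}) - \sin(\mathbf{v})\|_\infty \le \|\mathbf{u} - \mathbf{v}\|_\infty$; (ii) the induced operator norm satisfies $\|A\mathbf{v}\|_\infty \le \|A\|_\infty \|\mathbf{v}\|_\infty$; and (iii) the output of any sine activation lies in $[-1,1]$ componentwise. Let $\mathbf{h}^i(\mathbf{x})$ and $\widetilde{\mathbf{h}}^i(\mathbf{x})$ denote the post-activation outputs of the $i$-th sinusoidal layer of $f_\theta$ and $\widetilde{f_\theta}$. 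Since only layer $k$ is perturbed, $\mathbf{h}^i \equiv \widetilde{\mathbf{h}}^i$ for $i < k$.

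First I would bound the error introduced at layer $k$. Applying (i), then the triangle inequality and (ii),
\begin{align*}
\|\mathbf{h}^k - \widetilde{\mathbf{h}}^k\|_\infty
&\le \|(\mathbf{W}^k - \widetilde{\mathbf{W}}^k)\mathbf{h}^{k-1} + (\mathbf{b}^k - \widetilde{\mathbf{b}}^k)\|_\infty \\
&\le \|\mathbf{W}^k - \widetilde{\mathbf{W}}^k\|_\infty \, \|\mathbf{h}^{k-1}\|_\infty + \|\mathbf{b}^k - \widetilde{\mathbf{b}}^k\|_\infty \\
&\le \|\mathbf{W}^k - \widetilde{\mathbf{W}}^k\|_\infty + \|\mathbf{b}^k - \widetilde{\mathbf{b}}^k\|_\infty,
\end{align*}
where the last inequality uses $\|\mathbf{h}^{k-1}\|_\infty \le 1$ from (iii). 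For each subsequent layer $i \in \{k+1, \ldots, d\}$ the two networks share weights and biases, so (i) and (ii) together yield
$$\|\mathbf{h}^i - \widetilde{\mathbf{h}}^i\|_\infty \le \|\mathbf{W}^i\|_\infty \, \|\mathbf{h}^{i-1} - \widetilde{\mathbf{h}}^{i-1}\|_\infty.$$
Telescoping this inequality from $i = k+1$ up to $i = d$ produces the multiplicative factor $\prod_{i=k+1}^{d} \|\mathbf{W}^i\|_\infty$ in front of the layer-$k$ bound.

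Finally, applying the affine readout $\mathbf{L}$ cancels its bias in the difference $f_\theta(\mathbf{x}) - \widetilde{f_\theta}(\mathbf{x})$, leaving only the linear part, which contributes a factor $\|\mathbf{L}\|_\infty$ by (ii). Composing the three stages and taking a supremum in $\mathbf{x}$ gives the stated bound.

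The only subtlety — more bookkeeping than genuine obstacle — is the base of the telescope: the step $\|\mathbf{h}^{k-1}\|_\infty \le 1$ relies on $\mathbf{h}^{k-1}$ being a post-sine output. For a perturbation of the input-encoding layer this would need either a domain assumption on $\mathbf{x}$ or a modified bound involving $\|\mathbf{x}\|_\infty$; but under the stated hypothesis that layer $k$ is a hidden layer ($k \ge 1$), no adjustment is needed and the argument proceeds cleanly.
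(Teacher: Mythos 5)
Your proposal is correct and follows essentially the same argument as the paper's proof: the same three ingredients ($1$-Lipschitzness of $\sin$, the induced $\infty$-operator-norm bound, and boundedness of post-sine outputs by $1$) combined via the triangle inequality at layer $k$ and Lipschitz propagation through the unperturbed layers, with only the cosmetic difference that you propagate the error forward from layer $k$ while the paper peels the Lipschitz constants off from the output inward. Your closing remark about the bound requiring $k\ge 1$ matches the implicit assumption in the paper's final step.
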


\begin{proof}
    First, note that $\mathbf{x} \mapsto \mathbf{L} \mathbf{x}$ is $\|\mathbf{L}\|_\infty$-Lipschitz for infinity norms:
    \[ \|\mathbf{L} \mathbf{x} - \mathbf{L} \mathbf{x}'\|_\infty = \| \mathbf{L} (\mathbf{x} - \mathbf{x}') \|_\infty \leq \|\mathbf{L}\|_\infty \|\mathbf{x} - \mathbf{x}'\|_\infty. \]
    Second, note that $\mathbf{x} \mapsto \sin(\mathbf{x})$ is 1-Lipschitz also for infinity norms, and thus $\mathbf{x} \mapsto \mathbf{S}^i(\mathbf{x}) = \sin(\mathbf{W}^i \mathbf{x} + \mathbf{b}^i)$ is also $\|\mathbf{W}^i\|_\infty$-Lipschitz:
    \begin{align*}
        \|\sin(\mathbf{W}^i \mathbf{x} + \mathbf{b}^i) - \sin(\mathbf{W}^i \mathbf{x}' + \mathbf{b}^i)\|_\infty
        &\leq \|\sin\|_\mathrm{Lip} \|(\mathbf{W}^i \mathbf{x}' + \mathbf{b}^i) - (\mathbf{W}^i \mathbf{x}' + \mathbf{b}^i)\|_\infty
        \\ &\leq \|\mathbf{W}^i (\mathbf{x} - \mathbf{x}')\|_\infty
        \leq \|\mathbf{W}^i\|_\infty \|\mathbf{x} - \mathbf{x'}\|_\infty.
    \end{align*}
    It thus follows that, for any $\mathbf{x}$:
    \begin{align*}
        & \left\| (\mathbf{L} \circ \mathbf{S}^d \circ \cdots \circ \mathbf{S}^k \circ \cdots \circ \mathbf{S}^0)(\mathbf{x}) - (\mathbf{L} \circ \mathbf{S}^d \circ \cdots \circ \widetilde{\mathbf{S}}^k \circ \cdots \circ \mathbf{S}^0)(\mathbf{x}) \right\|_\infty
        \\ &\leq \|\mathbf{L}\|_\infty \left\| (\mathbf{S}^d \circ \cdots \circ \mathbf{S}^k \circ \cdots \circ \mathbf{S}^0)(x) - (\mathbf{S}^d \circ \cdots \circ \widetilde{\mathbf{S}}^k \circ \cdots \circ \mathbf{S}^0)(\mathbf{x}) \right\|_\infty
        \\ &\leq \|\mathbf{L}\|_\infty \left( \prod_{i=k+1}^d \|\mathbf{W}^i\|_\infty \right) \left\| (\mathbf{S}^k \circ \cdots \circ \mathbf{S}^0)(\mathbf{x}) - (\widetilde{\mathbf{S}}^k \circ \cdots \circ \mathbf{S}^0)(\mathbf{x}) \right\|_\infty
        \\ &= \|\mathbf{L}\|_\infty \left( \prod_{i=k+1}^d \|\mathbf{W}^i\|_\infty \right) \Bigl\| \sin(\mathbf{W}^k (\mathbf{S}^{k-1} \circ \cdots \circ \mathbf{S}^0)(\mathbf{x}) + \mathbf{b}^k) \\ &\qquad\qquad\qquad\qquad\qquad\qquad - \sin(\widetilde{\mathbf{W}}^k (\mathbf{S}^{k-1} \circ \cdots \circ \mathbf{S}^0)(\mathbf{x}) + \widetilde{\mathbf{b}}^k) \Bigr\|_\infty
        \\ &\leq \|\mathbf{L}\|_\infty \left( \prod_{i=k+1}^d \|\mathbf{W}^i\|_\infty \right) \|\sin\|_\mathrm{Lip} \Bigl\| (\mathbf{W}^k (\mathbf{S}^{k-1} \circ \cdots \circ \mathbf{S}^0)(\mathbf{x}) + \mathbf{b}^k) \\ &\qquad\qquad\qquad\qquad\qquad\qquad\qquad\qquad - (\widetilde{\mathbf{W}}^k (\mathbf{S}^{k-1} \circ \cdots \circ \mathbf{S}^0)(\mathbf{x}) + \widetilde{\mathbf{b}}^k) \Bigr\|_\infty
        \\ &= \|\mathbf{L}\|_\infty \left( \prod_{i=k+1}^d \|\mathbf{W}^i\|_\infty \right) \left\| (\mathbf{W}^k - \widetilde{\mathbf{W}}^k) (\mathbf{S}^{k-1} \circ \cdots \circ \mathbf{S}^0)(\mathbf{x}) + (\mathbf{b}^k - \widetilde{\mathbf{b}}^k) \right\|_\infty
        \\ &\leq \|\mathbf{L}\|_\infty \left( \prod_{i=k+1}^d \|\mathbf{W}^i\|_\infty \right) \left( \left\| (\mathbf{W}^k - \widetilde{\mathbf{W}}^k) (\mathbf{S}^{k-1} \circ \cdots \circ \mathbf{S}^0)(\mathbf{x}) \right\|_\infty + \left\| \mathbf{b}^k - \widetilde{\mathbf{b}}^k \right\|_\infty \right)
        \\ &\leq \|\mathbf{L}\|_\infty \left( \prod_{i=k+1}^d \|\mathbf{W}^i\|_\infty \right) \left( \|\mathbf{W}^k - \widetilde{\mathbf{W}}^k\|_\infty \left\| (\mathbf{S}^{k-1} \circ \cdots \circ \mathbf{S}^0)(\mathbf{x}) \right\|_\infty + \left\| \mathbf{b}^k - \widetilde{\mathbf{b}}^k \right\|_\infty \right).
    \end{align*}
    Finally, note that since sines lie in $[-1, +1]$, it must hold that $\left\| (\mathbf{S}^{k-1} \circ \cdots \circ \mathbf{S}^0)(\mathbf{x}) \right\|_\infty = \max_i \left| [(\mathbf{S}^{k-1} \circ \cdots \circ \mathbf{S}^0)(\mathbf{x})]_i  \right| \leq \max_i 1 = 1$, from which we conclude the proof.
\end{proof}

\section{Signed distance functions}

\begin{table}[]
\centering
\caption{Quantitative comparisons on representing surfaces from the Stanford 3D Scanning Repository with $\omega_0=30$. We compare the adapted network trained with \textbackslash{}method\{\}, the large network with standard training (large), and the model with small architecture and standard training (small). We report the average chamfer distance (Avg CD ($\times10^2$)) between reconstructed and ground-truth surfaces and the percentage of network parameters compared to the large architecture (lower is better).}
\label{tab:exp_surfaces_w0_30}
\begin{tabular}{llll}
\hline
\textbf{\begin{tabular}[c]{@{}l@{}}Model\\ \textsuperscript{(SDFs)}\end{tabular}} &
  \textbf{Variant} &
  \textbf{CD (×10²) ↓} &
  \textbf{\begin{tabular}[c]{@{}l@{}}Size \\ reduct. ↓\end{tabular}} \\ \hline
\multirow{3}{*}{SIREN} & Large & \textbf{0.56 ± 0.08} & -     \\
                       & Small & 0.59 ± 0.09          & 62.14 \\
                       & Ours  & 0.58 ± 0.06          & 62.14 \\[1.5mm]
\multirow{3}{*}{FINER} & Large & \textbf{0.63 ± 0.09} & -     \\
                       & Small & 0.67 ± 0.09          & 62.14 \\
                       & Ours  & \textbf{0.63 ± 0.06} & 62.14 \\ \hline
\end{tabular}
\end{table}

In Table~\ref{tab:exp_surfaces_w0_30} we evaluate AIRe (‘Ours’) against an overparametrized, large INR of size [256, 256, 256] and a small, reduced model of size [128, 128, 256]. These last two are fitted with the standard training pipeline, and the reconstruction quality was measured using the Chamfer Distance ($\times10^2$).

Table~\ref{tab: exp_surfaces_breakdown} shows a per-scene breakdown of the SDF quantitative results presented in the main paper when $\omega_0=60$ and and the small network size is $[64, 64, 256]$.
The per-scene breakdown is consistent with the aggregate quantitative metrics.
Our method outperforms the small network in all cases.
It also obtains similar or better accuracy compared to the large network but uses roughly 1/6 of network parameters.

\begin{table}[h!]
\centering
\caption{Per-scene quantitative comparisons on representing surfaces from the Stanford 3D Scanning Repository with $\omega_0=60$ and model size $[64, 64, 256]$. We compare \method{}, the network with large architecture, and the model with small architecture. We report the chamfer distance (CD ($\times10^2$)) between reconstructed and ground-truth surfaces (lower is better). Best values in \textbf{bold}, second best values \underline{underlined}.}
\label{tab: exp_surfaces_breakdown}
\small
\begin{tabular}{@{}ll|rrrrr@{}}
\toprule
\multirow{2}{*}{\textbf{Model}} & \multirow{2}{*}{\textbf{Variant}} & \multicolumn{5}{c}{\textbf{CD} $\mathbf{(\times10^2) \downarrow}$}                                                                                                     \\ \cmidrule(l){3-7} 
                              &                         & \multicolumn{1}{l}{\textbf{Armadillo}} & \multicolumn{1}{l}{\textbf{Bunny}} & \multicolumn{1}{l}{\textbf{Dragon}} & \multicolumn{1}{l}{\textbf{Happy Buddha}} & \multicolumn{1}{l}{\textbf{Lucy}} \\ \midrule
\multirow{3}{*}{SIREN}        & Large                 & \textbf{0.60}                 & \underline{0.75}                & \underline{0.65}                 & \textbf{0.50}                    & \underline{0.74}               \\
                              & Small                   & 0.99                          & 0.79                      & 0.82                       & 0.98                             & 0.86                     \\
                              & Ours          & \underline{0.65}                    & \textbf{0.69}             & \textbf{0.62}              & \underline{0.64}                       & \textbf{0.61}            \\ \midrule
\multirow{3}{*}{FINER}        & Large                 & \underline{2.13}                    & \underline{2.06}                & \underline{2.17}                 & 2.74                             & \underline{1.60}               \\
                              & Small                   & 5.51                          & 10.80                     & 4.57                       & \underline{2.58}                       & 1.92                     \\
                              & Ours          & \textbf{0.88}                 & \textbf{0.95}             & \textbf{0.73}              & \textbf{1.10}                    & \textbf{0.76}            \\ \bottomrule
\end{tabular}
\end{table}

Figure~\ref{fig:exp_surfaces_additional} shows additional examples of surface reconstructions using SIREN with $\omega_0=60$ and model architecture $[64,64, 256]$.
As in the other examples, the surface trained using our method presented a lower error compared to the small network.
We also see in Figure~\ref{fig: exp_surfaces_finer} an example using FINER with settings $\omega_0=60$ and network architecture $[64,64, 256]$.
Note that \method{} offers a better reconstruction than the small network with less artifacts.
\begin{figure}[h!]
    \centering
    \includegraphics[width=\linewidth]{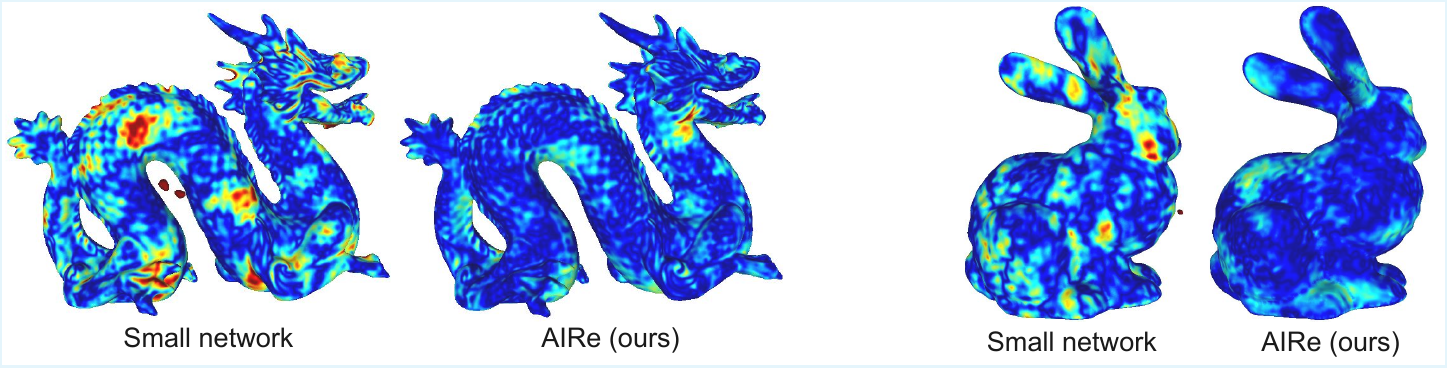}     
    \caption{Additional qualitative comparisons on representing surfaces based on the Dragon and Bunny models using a SIREN network with $\omega_0=60$ and model architecture $[64,64, 256]$.
    Left: results of training the small network.
    Right: results of \method{}.
    We illustrate the unsigned distance from the ground-truth surface using a color scale from dark blue (zero) to dark red ($\ge0.01$).}
    \label{fig:exp_surfaces_additional}
\end{figure}

\begin{figure}[h!]
    \centering
    \includegraphics[width=0.6\linewidth]{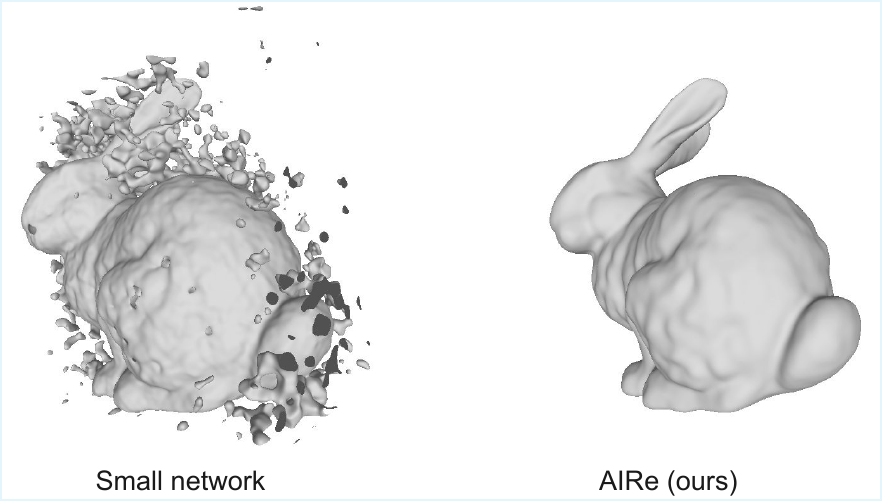}     
    \caption{A case with the Bunny model using a FINER network with $\omega_0=60$ and model size $[64, 64, 256]$. The final, small network (left) presents several artifacts, which does not occur with the \method{} method (right).}
    \label{fig: exp_surfaces_finer}
\end{figure}

Finally, we present Table~\ref{tab:surfaces_times}, where we compare the time overhead when training the large, small and adapted models,. Observe that the \method{} and the large model have an equivalent training time but with only $16.04\%$ of the original parameters. Furthermore, observe that a small model trained during the same amount of time has much worse accuracy than a network trained with \method{}.

\begin{table}[!htb]
\centering
\caption{\textbf{Time overhead comparisons when training SDFs.} We consider a \textit{Large} network with $\sim133K$ parameters and a \textit{Small} network with $\sim 22K$ parameters that are fitted for 1000 epochs. We compare them with a network adapted during 1000 epochs using AIRe up to a size equal to the small network. We also train the final network for 2000 epochs to compare the reconstruction quality along a time budget.}
\label{tab:surfaces_times}
\footnotesize
\setlength{\tabcolsep}{3pt}

\begin{tabular}{l|lcl}
\hline
\multicolumn{1}{c|}{\textbf{\begin{tabular}[c]{@{}c@{}}Variant\end{tabular}}} &
  \multicolumn{1}{c}{\textbf{\begin{tabular}[c]{@{}c@{}}CD \\ ($\times10^2$) ↓\end{tabular}}} &
  \multicolumn{1}{c}{\textbf{\begin{tabular}[c]{@{}c@{}}Size \\ reduct. ↓\end{tabular}}} &
  \multicolumn{1}{c}{\textbf{\begin{tabular}[c]{@{}c@{}}Time \\ (s) ↓\end{tabular}}} \\ \hline
Large                     & 0.65 & - & 76.0 \\
Small ($10^3$ ep)        & 0.89 & 16.04$\%$  & 39.4 \\
Small (2$\times10^3$ ep) & 0.86 & 16.04$\%$  & 83.2 \\
Ours                      & 0.64 & 16.04$\%$  & 76.8 \\ \hline
\end{tabular}
\hspace{0.1cm}
\begin{tabular}{l|lccl}
\hline
\multicolumn{1}{c|}{\textbf{SDF}} &
  \multicolumn{1}{c}{\textbf{Large}} &
  \multicolumn{1}{c}{\textbf{\begin{tabular}[c]{@{}c@{}}Small \\ ($10^3$ ep)\end{tabular}}} &
  \multicolumn{1}{c}{\textbf{\begin{tabular}[c]{@{}c@{}}Small\\ ($2\!\times\!10^3$ ep)\end{tabular}}} &
  \multicolumn{1}{c}{\textbf{\begin{tabular}[c]{@{}c@{}}Ours\end{tabular}}} \\ \hline
Armadillo    & 52   & 27   & 55   & 52   \\
Bunny        & 34   & 17   & 35   & 34   \\
Dragon       & 60   & 31   & 65   & 60   \\
Happy Buddha & 159  & 83   & 179  & 162  \\
Lucy         & 75   & 39   & 82   & 76   \\ \hline
Avg. time (s)      & 76.0 & 39.4 & 83.2 & 76.8 \\ \hline
\end{tabular}
\end{table}

\section{Images}
We present ablation studies to support the choice of hyperparameters for \method{}.
First, we investigate the optimal allocation of epochs between the targeted weight decay stage and the fine-tuning stage under a fixed training budget. Specifically, we train SIREN~\cite{sitzmann2020implicit} and FINER~\cite{liu2024finer} models, each with two hidden layers of 512 neurons, for a total of 5000 epochs.
Training begins with standard optimization for $x$ epochs, followed by targeted weight decay for $y$ epochs, where $x, y \in \{100, 750, 1000, 1250, 1500, 1750, 2000, 2250\}$. The remaining $5000 - x - y$ epochs are allocated to fine-tuning.

\begin{figure}[h!]
    \centering
    \includegraphics[width=0.47\linewidth]{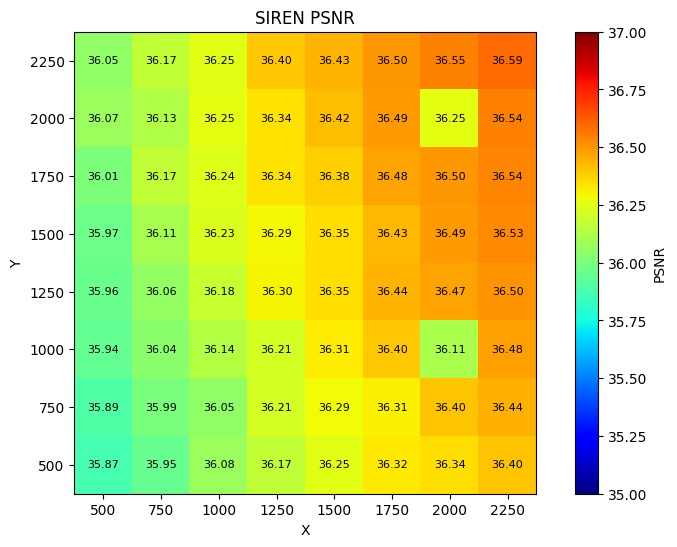}
    \includegraphics[width=0.47\linewidth]{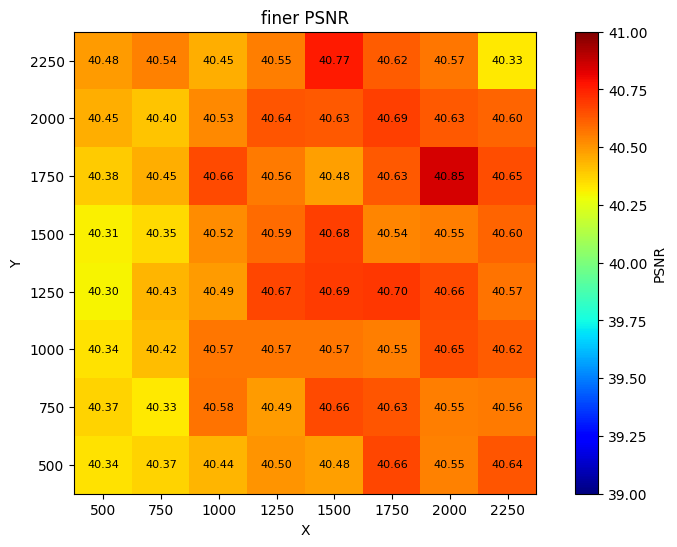}
    \caption{Ablation of the number of epochs used to pre-train the model (x axis), to train with targeted weight decay (y axis) and fine tune.
    The total training lasts for 5000 epochs, and each value refers to the mean PSNR over the DIV2K dataset. (Left) SIREN~\cite{sitzmann2020implicit} architecture shows that longer standard training and targeted weight decay stage improve quality, even with fewer fine tuning epochs. (Right) FINER architecture shows less consistency in the results, demonstrating that above $1000$ epochs of standard training the results improve, but show no clear pattern.}
    \label{fig: imgs_epochs}
\end{figure}

Figure~\ref{fig: imgs_epochs} shows the PSNR for each epoch distribution, where the $x$-axis corresponds to the number of epochs used for the initial standard training stage, and the $y$-axis indicates the number of epochs allocated to the targeted weight decay stage.
As shown, SIREN models benefit from increased training time in both the standard training and targeted weight decay stages, resulting in improved reconstruction accuracy. In contrast, FINER models show only marginal improvements when the initial training stage exceeds 1000 epochs.

To determine the optimal pruning configuration, both in terms of which layers to prune and the amount per layer, we train models with the best-performing epoch distribution for both SIREN and FINER over 5000 epochs, applying varying levels of pruning to each layer.
Figure~\ref{fig: imgs_perc_prune} presents the PSNR of each reconstruction, where $x$ represents the percentage of neurons pruned in the first layer, and $y$ denotes the percentage pruned in the second layer.

\begin{figure}[h!]
    \centering
    \includegraphics[width=0.47\linewidth]{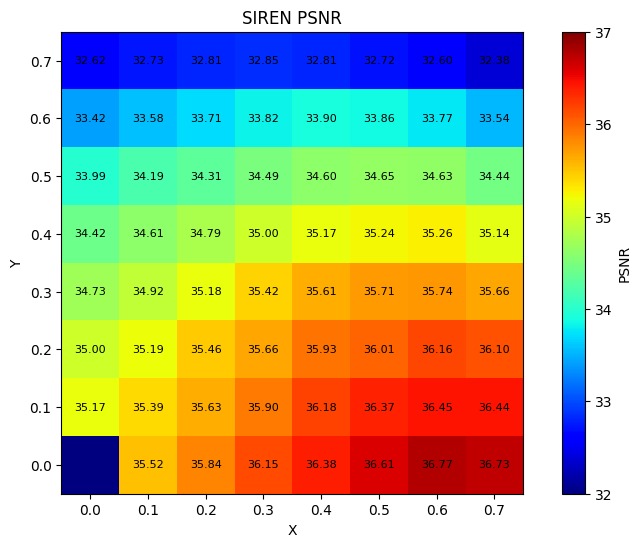}
    \includegraphics[width=0.47\linewidth]{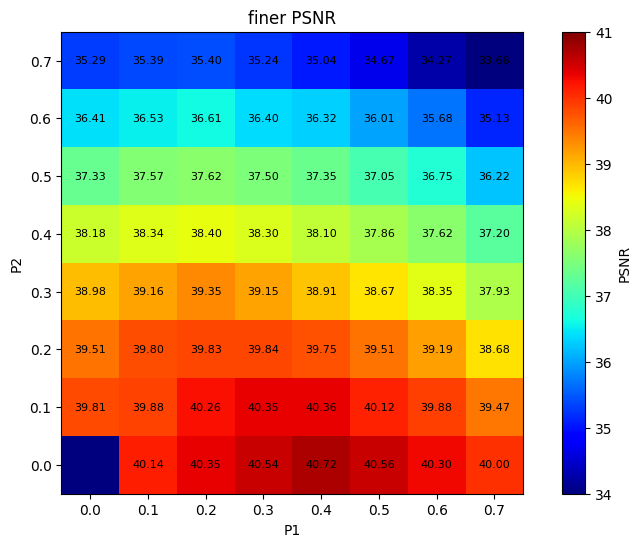}
    \caption{Ablation of the 
    quality degradation with respect to the percentage of prune on the 1st hidden layer (x axis) and the 2nd hidden layer (y axis).
    The total training lasts for 5000 epochs, and each value refers to the mean PSNR over the DIV2K dataset. Observe that both images show that pruning the 1st layer retains more quality than pruning the 2nd layer. (Left) A SIREN~\cite{sitzmann2020implicit} architecture reconstruction quality is preserved even with an extreme prune of $60\%$. (Right) FINER architecture quality is better retained when pruning the first layer, albeit with less percentage.}
    \label{fig: imgs_perc_prune}
\end{figure}

Both SIREN and FINER benefit from pruning the first layer, although the optimal percentage of neuron removal differs between the two. In contrast, pruning hidden layers generally leads to a degradation in reconstruction quality.

We also perform an ablation study on the use of regularization to improve neuron removal during training. Specifically, we train a sinusoidal INR using three configurations: standard weight decay, targeted weight decay, and no regularization prior to pruning. Standard weight decay yields the lowest reconstruction accuracy at 34.2dB, while removing regularization improves the result by 0.51dB. The targeted weight decay stage achieves the best performance, increasing accuracy to 36.9~dB.

For the densification strategy, we examine the impact of varying both the number of training epochs before densification and the percentage of new input neurons added. Concretely, the INR is initially trained for $x$ epochs, then its first layer is expanded by $(y*100)\%$, and the augmented network is fine-tuned for the remaining $3000 - x$ epochs.
The results are presented in Figure~\ref{fig: imgs_densification}.

\begin{figure}[h!]
    \centering
    \includegraphics[width=0.6\linewidth]{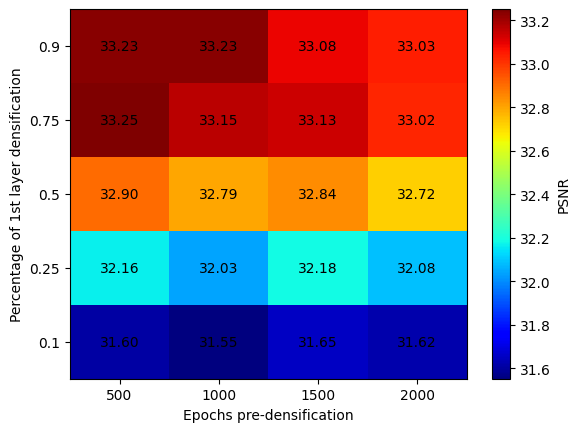}
    \caption{Ablation on the epochs trained before densifying ($x$ axis) compared to the percentage of input neurons added ($y$ axis).}
    \label{fig: imgs_densification}
\end{figure}

As expected, increasing the number of neurons leads to higher PSNR values. Additionally, accuracy improves when a larger number of neurons is added early in the training process (i.e., before 1500 epochs).

\section{Additional discussions}

We provide further details regarding the parameter settings used in our experiments.
The targeted weight decay stage is trained using the following loss function $$\mathcal{L}_{\alpha, \mathcal{I}} = \mathcal{L}_{\text{data}} + \alpha\sum_{j\in\mathcal{I}}\|\W_{*j}\|_1,$$ where $\alpha$ is a parameter that starts at zero and increases linearly up to one at the end of this stage.
For the pruning scheme, we use the Prune package in PyTorch, using structured masks over the to remove the corresponding weights.
Specifically, when pruning neuron $h^i_j$, we mask the entries of the $j$-th column of $\W^i$.
This removes all the neuron's influence from the network.
As for the densification technique, we preserve the optimizer state of previous neurons to minimize training affectation.

We trained using a 12 GB NVIDIA GPU (TITAN X Pascal) and a 24 GB NVIDIA GPU (RTX 4090).

\subsection{Neural Radiance Fields}

\begin{figure*}
     \captionsetup[subfigure]{labelformat=empty}
     \centering
     \begin{subfigure}[b]{0.325\textwidth}
         \centering
         \includegraphics[width=\textwidth]{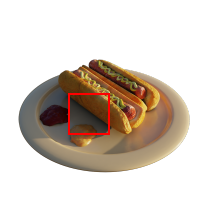}
     \end{subfigure}
     \begin{subfigure}[b]{0.325\textwidth}
         \centering
         \includegraphics[width=\textwidth]{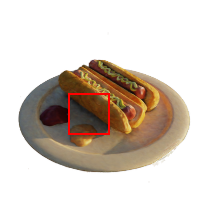}
     \end{subfigure}
     \begin{subfigure}[b]{0.325\textwidth}
         \centering
         \includegraphics[width=\textwidth]{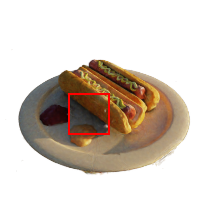}
     \end{subfigure}
     \begin{subfigure}[b]{0.325\textwidth}
         \centering
         \includegraphics[width=0.5\textwidth]{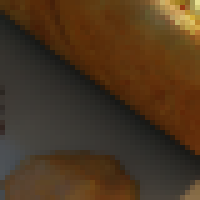}
     \end{subfigure}
     \begin{subfigure}[b]{0.325\textwidth}
         \centering
         \includegraphics[width=0.5\textwidth]{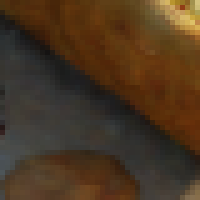}
     \end{subfigure}
     \begin{subfigure}[b]{0.325\textwidth}
         \centering
         \includegraphics[width=0.5\textwidth]{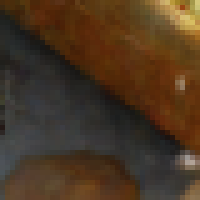}
     \end{subfigure}
     \begin{subfigure}[b]{0.325\textwidth}
         \centering
         \includegraphics[width=\textwidth]{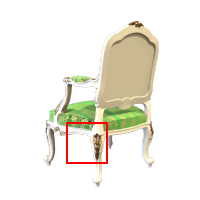}
     \end{subfigure}
     \begin{subfigure}[b]{0.325\textwidth}
         \centering
         \includegraphics[width=\textwidth]{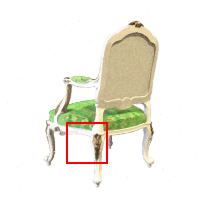}
     \end{subfigure}
     \begin{subfigure}[b]{0.325\textwidth}
         \centering
         \includegraphics[width=\textwidth]{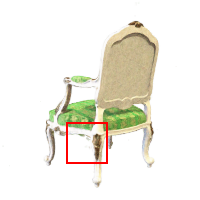}
     \end{subfigure}
     \begin{subfigure}[b]{0.325\textwidth}
         \centering
         \includegraphics[width=0.5\textwidth]{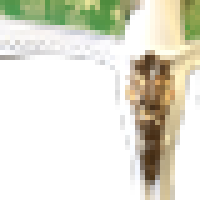}
     \end{subfigure}
     \begin{subfigure}[b]{0.325\textwidth}
         \centering
         \includegraphics[width=0.5\textwidth]{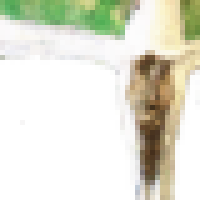}
     \end{subfigure}
     \begin{subfigure}[b]{0.325\textwidth}
         \centering
         \includegraphics[width=0.5\textwidth]{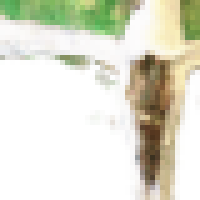}
     \end{subfigure}
     \begin{subfigure}[b]{0.325\textwidth}
         \centering
         \includegraphics[width=\textwidth]{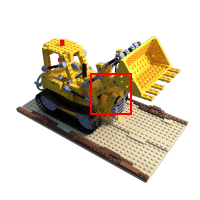}
     \end{subfigure}
     \begin{subfigure}[b]{0.325\textwidth}
         \centering
         \includegraphics[width=\textwidth]{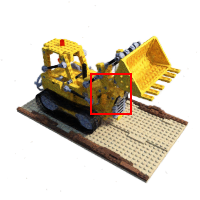}
     \end{subfigure}
     \begin{subfigure}[b]{0.325\textwidth}
         \centering
         \includegraphics[width=\textwidth]{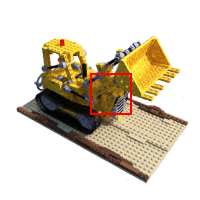}
     \end{subfigure}
     \begin{subfigure}[b]{0.325\textwidth}
         \centering
         \includegraphics[width=0.5\textwidth]{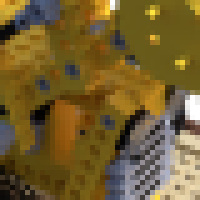}
         \caption{Ground Truth}
     \end{subfigure}
     \begin{subfigure}[b]{0.325\textwidth}
         \centering
         \includegraphics[width=0.5\textwidth]{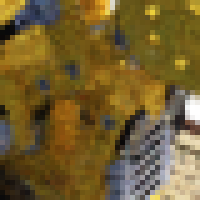}
         \caption{TWD+P (Ours)}         
     \end{subfigure}
     \begin{subfigure}[b]{0.325\textwidth}
         \centering
         \includegraphics[width=0.5\textwidth]{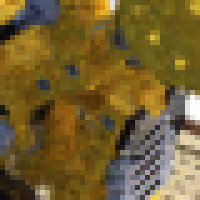}
         \caption{Small}
     \end{subfigure}
     
        \caption{Qualitative comparisons on representing NeRFs using the Hotdog (top), Chair (middle), and Lego (bottom) scenes from the Blender dataset. First column: ground-truth views. Second column: results of our proposed approach of targeted weight decay for pruning (TWD+P). Third column: results of training from scratch a small network (Small) with an architecture equivalent to ours after pruning. Differences in quality highlighted by insets.}
        \label{fig: exp_nerfs}
\end{figure*}

We adopt the torch-ngp framework~\footnote{https://github.com/ashawkey/torch-ngp} for NeRF implemented by FINER that considers two networks:
A density network that takes a 3D position as input and outputs a density value and a geometric feature vector $v\in\R^{182}$; and a color network that receives $v$ and a 3D direction and returns a RGB color.
NeRF computes the color of a pixel with volume rendering using 3D points sampled on a ray traced from the center of the virtual camera through the pixel~\cite{mildenhall2021nerf}.
We set the batch size to $4,096$ rays and Adam optimizer with learning rate of $0.0002$, $\beta_1=0.9$, $\beta_1=0.99$, $\epsilon=10^{-15}$, and exponential learning rate decay of $0.1$.
We update the model weights using an exponential moving average with a decay of $0.95$.
We follow FINER's experimental setting, where for each scene of the Blender dataset, we have $25$ images for training, $200$ images for testing,  all downsampled to $200\times200$ pixels.
We employ the PSNR and the number of network parameters (Params) as evaluation~metrics.

We evaluate three approaches for NeRF training:  \textbf{Large} and \textbf{Small} networks, which train from scratch for $1.5\times10^3$ epochs a density network of size $[182, 182, 182]$ and color networks with architecture $[182, 182, 182]$ and $[91, 91, 182]$, respectively. 
On the other hand, \textbf{\method{}} considers training from scratch for $300$ epochs the same networks from Large model, then selecting $50\%$ of neurons from both hidden layers of the density network for $750$ epochs of TWD, followed by pruning of selected neurons, and finally $450$ epochs of fine-tuning of both density and color networks.
\vspace{-0.3cm}

\begin{table}[h!]
\small
\setlength{\tabcolsep}{3pt}
\centering
\caption{Quantitative comparisons between \method{}'s pruning scheme with training from scratch the `Large' and `Small' networks. We report the average PSNR between reconstructed and ground-truth test views (higher is better), the PSNR difference with respect to Large (higher is better), and the percentage of network parameters with respect to Large (higher is better). Best values in \textbf{bold}, second best values \underline{underlined}.}
\label{tab: exp_nerfs}
\begin{tabular}{l|l|ccccccccc|c}
\cline{2-12}
 & Method & Chair & Drums & Ficus & Hotdog & Lego & Materials & Mic & Ship & Avg & Size reduct. \\ \hline
\multirow{3}{*}{\rotatebox[origin=c]{90}{PSNR$\uparrow$}} & Large & \textbf{34.04} & \textbf{24.81} & \textbf{28.84} & \textbf{33.42} & \textbf{29.96} & \textbf{27.01} & \textbf{33.96} & \textbf{22.55} & \textbf{29.32} & - \\
 & Small & 33.12 & \underline{24.14} & 27.77 & 32.06 & 28.75 & \underline{26.47} & \underline{33.68} & \underline{22.28} & 28.53 & \textbf{20.74\%} \\
 & Ours & \underline{33.23} & 24.11 & \underline{27.82} & \underline{33.10} & \underline{28.82} & 26.21 & 33.59 & 22.26 & \underline{28.64} & \textbf{20.74\%} \\ \hline
\end{tabular}
\end{table}
Table~\ref{tab: exp_nerfs} shows that compared to Large, the decrease in PSNR  of \method{} was $13.9\%$ lower than the PSNR of the Small network approach, even when both have the same number of network parameters.
The pruning procedure allows our NeRF to save more than $20\%$ of network parameters compared to the Large approach.
We also see qualitative improvements compared to the Small network, such as shadows/bright spots in the Hotdog (see Figure~\ref{fig:teaser}). 

\end{document}